\documentclass[11pt]{article}
\usepackage[utf8]{inputenc}
\usepackage{amsmath, amssymb, amsthm}
\usepackage{geometry}
\usepackage{hyperref}
\usepackage{authblk}
\usepackage{xcolor}
\usepackage{mleftright}\mleftright
\definecolor{niceRed}{RGB}{190,38,38}
\definecolor{Red2}{RGB}{219, 50, 54}
\definecolor{mgreen}{RGB}{160, 200, 140}
\definecolor{blueGrotto}{RGB}{5,157,192}
\definecolor{limeGreen}{HTML}{81B622}
\definecolor{myellow}{rgb}{0.88,0.61,0.14}
\definecolor{darkGreen}{HTML}{2E8B57}
\definecolor{navyBlueP}{HTML}{03468F}
\definecolor{Sepia}{HTML}{7F462C}
\definecolor{red2}{HTML}{1F462C}
\definecolor{orange2}{HTML}{FF8000}
\definecolor{mgray}{HTML}{ABB3B8}
\definecolor{lgray}{HTML}{E5E8E9}
\definecolor{myPurple}{RGB}{175,0,124}
\definecolor{mypurple2}{rgb}{0.8,0.62,1}
\definecolor{royalBlue}{HTML}{057DCD}
\definecolor{mpink}{HTML}{FC6C85}
\definecolor{lblue}{RGB}{74,144,226}
\definecolor{peagreen}{RGB}{152,193,39}
\definecolor{typ_navy}{HTML}{001f3f}
\definecolor{typ_blue}{HTML}{0074d9}
\definecolor{typ_aqua}{HTML}{7fdbff}
\definecolor{typ_teal}{HTML}{39cccc}
\definecolor{typ_eastern}{HTML}{239dad}
\definecolor{typ_purple}{HTML}{b10dc9}
\definecolor{typ_fuchsia}{HTML}{f012be}
\definecolor{typ_maroon}{HTML}{85144b}
\definecolor{typ_red}{HTML}{ff4136}
\definecolor{typ_orange}{HTML}{ff851b}
\definecolor{typ_yellow}{HTML}{ffdc00}
\definecolor{typ_olive}{HTML}{3d9970}
\definecolor{typ_green}{HTML}{2ecc40}
\definecolor{typ_lime}{HTML}{01ff70}
\definecolor{newgreen}{HTML}{83c702}
\definecolor{newpurp}{RGB}{97,96,121}
\definecolor[named]{Purple}{cmyk}{0.55,1,0,0.15}
\definecolor[named]{DarkBlue}{cmyk}{1,0.58,0,0.21}
\hypersetup{colorlinks,
    colorlinks = true,
    linkcolor=DarkBlue,
    citecolor=Sepia,
    urlcolor=Purple,
    filecolor=DarkBlue
    linktocpage = true,
}
\usepackage{enumitem}
\usepackage{tikz}
\usepackage[capitalize,noabbrev]{cleveref}
\usepackage[style=alphabetic,natbib=true,maxbibnames=10]{biblatex}

\usetikzlibrary{arrows, arrows.meta,fit}
\pgfdeclarelayer{background}
\pgfsetlayers{background,main}

\usepackage{mathtools}
\newtheorem{theorem}{Theorem}
\newtheorem{lemma}{Lemma}
\newtheorem{proposition}{Proposition}
\newtheorem{corollary}{Corollary}

\newtheorem{definition}{Definition}

\newtheorem{remark}{Remark}
\newtheorem{fact}{Fact}

\newcommand{\R}{\mathbb{R}}
\newcommand{\E}{\mathbb{E}}

\DeclareMathOperator{\co}{conv}

\newcommand{\cF}{\mathcal{F}}
\newcommand{\cH}{\mathcal{H}}
\newcommand{\cX}{\mathcal{X}}
\newcommand{\cD}{\mathcal{D}}
\newcommand{\cL}{\mathcal{L}}

\newcommand{\cZ}{\mathcal{Z}}
\newcommand{\cO}{\mathcal{O}}
\newcommand{\cC}{\mathcal{C}}
\newcommand{\cY}{\mathcal{Y}}
\newcommand{\cP}{\mathcal{P}}
\newcommand{\ECE}{\mathrm{SwapECE}_T}
\newcommand{\yt}{\tilde{y}}

\newcommand{\cA}{\mathcal{A}}

\renewcommand{\epsilon}{\varepsilon}
\newcommand{\eps}{\epsilon}

\newcommand{\MCerr}{\textrm{\normalfont MC-Err}}
\newcommand{\Regret}{\textrm{\normalfont Regret}}
\newcommand{\EVI}{\textrm{\normalfont EVI}}

\newcommand{\poly}{\mathrm{poly}}

\DeclareMathOperator*{\argmin}{arg\,min}
\DeclareMathOperator*{\argmax}{arg\,max}

\usepackage[linesnumbered,ruled,lined,noend]{algorithm2e}
\input{algo_config}

\title{An Efficient Black-Box Reduction from Online Learning to Multicalibration, and a New Route to $\Phi$-Regret Minimization}
\date{}

\newcommand{\BR}{\sigma}

\author[1]{Gabriele Farina}
\author[1,2]{Juan Carlos Perdomo}
\affil[1]{MIT EECS}
\affil[2]{New York University}

\begin{document}

\maketitle

\begin{abstract}
We give a Gordon-Greenwald-Marks (GGM) style black-box reduction from online learning to online multicalibration. Concretely, we show that to achieve high-dimensional multicalibration with respect to a class of functions $\mathcal{H}$, it suffices to combine any no-regret learner over $\mathcal {H} $ with an expected variational inequality (EVI) solver. 
We also prove a converse statement showing that efficient multicalibration implies efficient EVI solving, highlighting how EVIs in multicalibration mirror the role of fixed points in the GGM result for $\Phi$-regret.

This first set of results addresses the high-dimensional analogue of the open question in Garg, Jung, Reingold, and Roth (SODA '24), showing that oracle-efficient online multicalibration with $\sqrt{T}$-type guarantees is possible in full generality. Furthermore, our GGM-style reduction unifies the analyses of existing algorithms, transfers guarantees from online learning to multicalibration for challenging environments with delayed observations or censored outcomes, and yields the first efficient black-box reduction between online learning and multiclass omniprediction.

Our second main result is a fine-grained reduction from high-dimensional online multicalibration to (contextual) $\Phi$-regret minimization. Together with our first result, this establishes a new route from external regret to $\Phi$-regret that bypasses sophisticated fixed-point or semi-separation machinery, dramatically simplifies a result of Daskalakis, Farina, Fishelson, Pipis, and Schneider (STOC '25) while improving rates, and yields new algorithms that are robust to richer deviation classes, such as those belonging to any reproducing kernel Hilbert space.
\end{abstract}
\clearpage
\tableofcontents
\clearpage

\vspace{1cm}

\section{Introduction}
Originally developed as a definition of algorithmic fairness, multicalibration has turned out to have interesting and important implications beyond its initial scope. 
Within learning theory, multicalibrated predictors are outcome indistinguishable; they constitute generative models that are computationally unfalsifiable \citep{dwork2021outcome}. 
Multicalibration also implies robustness to distribution shift \citep{kim2022universal} and loss minimization, not just for one loss, but for many losses simultaneously \citep{omnipredictors}. Within complexity theory, recent work has started to explore how multicalibration enables stronger versions of classical regularity lemmas, such as the Dense Model Theorem and the Hardcore Lemma \citep{silvia,lunjia}. In game theory, notions of calibration are intimately tied to swap regret minimization and thus to correlated equilibria \citep{foster1997calibrated,foster1998asymptotic,kakade2008deterministic}.

Despite this growing web of connections, the \emph{algorithmic} understanding of how to achieve multicalibration remains rather fragmented.  
In the batch setting, the only known algorithms are boosting-style procedures that use polynomially more samples than necessary. 
In the online setting, where procedures enjoy
worst-case guarantees and yield batch algorithms with optimal $\cO(1/\eps^2)$ sample
complexity via online-to-batch reductions, we have a complicated patchwork of
designs. 
Each algorithm targets a specific outcome type (e.g., binary) or hypothesis class (e.g., linear functions) and requires its own idiosyncratic and often lengthy analysis.
In short, we lack a clean algorithmic template that works in full generality.

There is, however, a compelling precedent for a solution to this issue. Multicalibration is reminiscent of a different and older problem in online learning, namely $\Phi$-regret minimization \citep{greenwald2003general}. Both are parameterized by a family of functions: tests in the case of multicalibration, and a set $\Phi$ of strategy transformations $\phi: \cZ \to \cZ$ in the case of $\Phi$-regret on a strategy set $\cZ$. Since the seminal result of \citet*{gordon2008no} (henceforth abbreviated as Gordon-Greenwald-Marks or GGM), it is known that $\Phi$-regret minimization can be efficiently reduced, black-box, to standard regret minimization over $\Phi$, converting the outputs over $\Phi$ into strategies in $\cZ$ by computing a \emph{fixed point} of the transformations. Since 2008, this black-box reduction has been the standard approach for $\Phi$-regret algorithms, and has led to a flurry of results for equilibrium computation (e.g., \citep{farina2022simple,fujii2023bayes,daskalakis2025efficient,arunachaleswaran2025swap} and references therein). 

In light of the above, this paper is motivated by two questions.
\begin{enumerate}[label={(Q\arabic*)}, left=1mm]
    \item\label{q1} Is there a Gordon-Greenwald-Marks-like reduction for online multicalibration? 
    That is, can online multicalibration be black-box reduced to regret minimization, and if so, what nonlinear optimization primitive plays the role that fixed points play for $\Phi$-regret?

    \item\label{q2} How formal is the resemblance between multicalibrated forecasting and $\Phi$-regret? Can one devise a more fine-grained connection between online $\cH$-multicalibration and $\Phi$-regret with $\cH$ scaling naturally with the complexity of $\Phi$ and recovering the well-known connection between strong calibration and swap regret in the limit?
\end{enumerate}

\noindent
We establish positive answers to both questions.

Our first contribution, addressing \ref{q1}, is a Gordon-Greenwald-Marks-style theorem for online multicalibration. This resolves the high-dimensional version of one of the open questions in \citet*{garg2024oracle} regarding the existence of efficient oracle-based constructions for online multicalibration. Specifically, we show that online multicalibration can be obtained efficiently and in a black-box fashion with respect to any class $\cH$ and set of high-dimensional outcomes $\cY$ by combining external regret minimization over test functions in $\cH$ with the nonlinear optimization primitive of expected variational inequalities (EVIs). This black-box reduction is especially appealing in light of recent algorithmic progress on EVIs, demonstrating that they can be solved very efficiently \citep{zhang2025expected}. Furthermore, we show that EVIs are not only sufficient but also necessary: contingent on an efficient oracle for online learning over $\cH$, an efficient online multicalibrated forecasting algorithm exists if and only if one can efficiently solve EVIs defined by the test functions in $\cH$. Importantly, our approach avoids binning or discretization arguments over the outcome space $\cY$ in order to preserve computational efficiency when outcomes $y$ are high-dimensional vectors.

Our reduction enables a technical transfer of ideas from the rich literature on online learning to the more recent area of multicalibration. Looking backward, we recover and dramatically simplify the proofs of prior online multicalibration algorithms. By leveraging recent advances in EVIs, these reinterpretations also yield stronger guarantees. Looking forward, we leverage our reduction to design new algorithms for previously unstudied versions of online multicalibration. In particular, we develop forecasting procedures with guarantees in challenging learning environments, such as settings with delayed feedback or censored observations, simply by swapping in the appropriate off-the-shelf regret minimizer.
Given how online problems generate offline problems, these new algorithms also imply better batch algorithms via online-to-batch reductions.

Our second contribution, addressing \ref{q2}, runs in the opposite direction: from forecasting to decision making. We show that an efficient no-$\Phi$-regret algorithm can be constructed by first building an online forecaster for the losses that is multicalibrated with respect to a set of tests defined in terms of $\Phi$,
and then best responding. This result yields a substantially finer-grained
reduction from decision making to forecasting than the folklore connection between swap regret
and $\ell_1$-multicalibration (e.g., \citep{foster1997calibrated}), and yields state-of-the-art $\Phi$-regret algorithms for new domains.

Taken together, we show that our two main reductions provide an alternative path between external regret and $\Phi$-regret to that established by \citet{gordon2008no}: Rather than optimizing directly over endomorphisms and then computing fixed points, one can route through multicalibration and EVIs.
Recent work on $\Phi$-regret minimization for expressive deviation classes has increasingly run into the limits of the GGM path: extending it to richer classes of deviations over general action sets $\cZ$ has required increasingly elaborate machinery, including semi-separation oracles and careful relaxations of sets of endomorphisms \citep{daskalakis2025efficient}. Certain natural classes, such as deviations lying in any reproducing kernel Hilbert space (RKHS), have remained out of reach. Our multicalibration route provides a simplified technical approach that not only recovers prior results on linear and low-degree polynomial swap regret \citep{daskalakis2025efficient,arunachaleswaran2025swap,zhang2025learning} but also establishes the first known no-$\Phi$-regret algorithms for new settings, such as RKHSs.

\subsection{Background: Online Forecasting and Decision-Making}

The focus of our work is on understanding the algorithmic landscape and connections between online multicalibrated forecasting algorithms and online decision-making algorithms for $\Phi$-regret. We start by defining both of these concepts formally. 

While much of the literature has focused on one-dimensional, \emph{binned} notions of multicalibration, in this paper we are interested in the high-dimensional, non-binned definition that has recently been studied (\emph{e.g.}, \citet[Def. 1]{deng2023happymap} and \citet[Def. 3.2]{dwork2023pseudorandomness}).
As we show in \Cref{sec:ece}, this notion generalizes prior popular instantiations in the literature, such as $\ell_1$ notions of calibration (also known as expected calibration error or ECE), where errors are summed over a grid of discretized bins. See \Cref{rem:high dim} in \Cref{sec:ece} for further discussion. 

\begin{definition}[Online Multicalibration]\label{eq:omc}
Consider the online protocol where at every time  $t$, Nature selects features $x_t \in \cX$, and then the Learner produces a distribution $\cD_t$ over forecasts $p_t \in \cY \subset \R^d$ where $\cY$ is a compact, convex set. Lastly, Nature reveals the outcome $y_t \in \cY$.

An algorithm $\cA$ guarantees online multicalibration with respect to a class $\cH \subseteq \{\cX \times \cY \to \R^d\}$, if regardless of Nature's choices of $(x_t,y_t)$, the multicalibration error
\begin{align*}
\MCerr_T(h) \coloneqq \sum_{t=1}^T \E_{p_t \sim \cD_t} \Big[ h(x_t,p_t)^\top (y_t - p_t) \Big] 
\end{align*}
grows sublinearly as a function of $T$ for all test functions $h \in \cH$.
For brevity, we adopt the shorthand $\MCerr_T \coloneqq \sup_{h\in \cH} \MCerr_T(h)$.
\end{definition}

Intuitively, the sequence of distributions $\cD_t$ is multicalibrated if the expected errors, $p_t -y_t$, are uncorrelated with any function in the class $\cH$. Here, we stated the definition without absolute values, since it allows for finer-grained results. However, one can always recover the absolute values by requiring $\cH$ to be symmetric, meaning that for any $h\in\cH$, $-h$ is also in the class.

Next, we define $\Phi$-regret. We opt to present the slightly more permissive definition described by \citep{zhang2024efficient}, which enables the Learner to output \emph{distributions} over points in $\cZ$ at every round. This definition preserves connections between $\Phi$-regret and $\Phi$-equilibria, but enables efficient algorithmic paths that the original definition of \citet{greenwald2003general} precluded when $\Phi$ contains nonlinear functions.\footnote{Conversely, when all transformations in $\Phi$ are linear, the expectation can be pushed through the deviation, and any $\Phi$-regret minimizer in the more permissive sense of \cref{def:phi-regret} can be coalesced into the less permissive definition in which a single point (the expectation of $\mu_t$) is output.}

\begin{definition}[$\Phi$-Regret Minimization]\label{def:phi-regret}
Fix a set of deviations $\Phi \subseteq \{\cZ \to \cZ\}$ where $\cZ \subset \R^d$ is a compact, convex set of actions. Consider the online protocol in which, at each time $t$, the Learner selects a distribution $\mu_t$ over actions in $\cZ$, and then Nature reveals a loss vector $\ell_t \in \R^d$ from some set $\cL$. An algorithm guarantees no-$\Phi$-regret if it selects $\mu_t$ such that no matter Nature's choices of $\ell_t$
\[
\Phi\textup{-Regret}_T(\phi) \coloneqq \sum_{t=1}^{T} \E_{z_t \sim \mu_t}\Big[\ell_t^\top z_t - \ell_t^\top \phi(z_t) \Big]
\]
grows sublinearly as a function of $T$ for all transformations $\phi \in \Phi$. We adopt the shorthand $\Regret_T \coloneqq \sup_{\phi\in\Phi} \Regret_T(\phi)$.
\end{definition}
When $\Phi$ is the set of all \emph{constant} functions, $\Phi$-regret reduces to standard (also known as \emph{external}) regret.
At the opposite extreme, when $\Phi$ is the set of all functions from $\cZ$ to $ \cZ$, it captures full \emph{swap} regret. Swap regret is, however, known to be hard to minimize over general convex sets $\cZ$ \citep{daskalakis2024lower}, motivating a line of work proposing algorithms and definitions that guarantee low regret with respect to increasingly rich, yet tractable classes of deviations (e.g., \citep{morrill2021efficient,farina2023polynomial,arunachaleswaran2025swap,farina2022simple,mansour2022strategizing,fujii2023bayes} and references therein).

\paragraph{The GGM paradigm.}
The celebrated result of \citet{gordon2008no} established that $\Phi$-regret minimization can be reduced, in a black-box fashion, to external regret minimization over the class $\Phi$. The nonlinear optimization primitive enabling this reduction is the computation of \emph{expected fixed points} (EFPs),\footnote{Technically, \citet{gordon2008no} uses fixed points, but their result goes through directly for our more permissive definition of $\Phi$-regret if one instead uses \emph{expected} fixed points, which are easier to compute. See \citet{zhang2024efficient} for further discussion.} which solve the following problem. 

Given an endomorphism $\phi: \cZ \to \cZ$ and $\eps > 0$, find a polynomially-supported discrete distribution $\mu$ over $\cZ$  such that 
\begin{align*}
   \big\| \E_{x\sim \mu}[\phi(x) - x] \big\|_2 \leq \eps.
\end{align*}
With that primitive, the following result can be shown.
\begin{theorem}[\cite{gordon2008no}, Informal]
Let $\cA$ be any external regret minimizer for the set of transformations $\Phi$. Then, given the ability to efficiently solve expected fixed points for any $\phi \in \Phi$, $\cA$ can be efficiently converted in a black-box fashion into an efficient online algorithm $\cA'$ that guarantees no-$\Phi$-regret over $\cZ$.
\end{theorem}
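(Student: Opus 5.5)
The plan is the standard GGM argument, adapted to distributions and expected fixed points. At each round $t$, the external regret minimizer $\cA$ over $\Phi$ outputs a transformation $\phi_t \in \Phi$. We may assume $\Phi$ is convex, or let $\cA$ output a distribution over $\Phi$ and take the mixture, which is again an endomorphism of $\cZ$ when the mixture is formed pointwise. We then call the expected-fixed-point oracle on $\phi_t$ with accuracy $\eps_t$. This returns a polynomially supported distribution $\mu_t$ over $\cZ$ with $\|\E_{z\sim\mu_t}[\phi_t(z)-z]\|_2\le \eps_t$, and $\cA'$ plays $\mu_t$. After Nature reveals $\ell_t$, we feed $\cA$ the loss function over $\Phi$
\[
c_t(\phi) \coloneqq -\E_{z\sim\mu_t}\big[\ell_t^\top \phi(z)\big].
\]
Equivalently, $\cA$ receives the utility $\E_{z\sim \mu_t}[\ell_t^\top(z-\phi(z))]$, which is the quantity whose sum is the $\Phi$-regret. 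The function $c_t$ is linear in $\phi$ pointwise, so it is a legitimate online linear loss over $\Phi$. We can evaluate it efficiently because $\mu_t$ has polynomial support.

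The key decomposition is the following. For any fixed $\phi\in\Phi$,
\[
\Phi\textup{-Regret}_T(\phi) = \sum_{t=1}^T \E_{\mu_t}\big[\ell_t^\top z - \ell_t^\top \phi_t(z)\big] + \sum_{t=1}^T \big(c_t(\phi)-c_t(\phi_t)\big)\cdot(-1)\cdot(-1),
\]
that is, we add and subtract $\E_{\mu_t}[\ell_t^\top\phi_t(z)]$. The second sum equals $\sum_t c_t(\phi_t)-c_t(\phi)$, the external regret of $\cA$ against $\phi$, which is sublinear by assumption. The first sum equals $\sum_t \ell_t^\top \E_{\mu_t}[z-\phi_t(z)]$. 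By Cauchy--Schwarz it is bounded by $\sum_t \|\ell_t\|_2\,\eps_t \le L\sum_t\eps_t$, where $L$ bounds the norms of losses in $\cL$. Taking $\eps_t = 1/\sqrt{t}$, or $\eps_t=1/T$, makes this term $O(L\sqrt T)$ or $O(L)$. Hence $\Phi\textup{-Regret}_T \le \Regret^{\cA}_T + L\sum_t\eps_t = o(T)$. Efficiency follows because each round makes one call to $\cA$ and one call to the EFP oracle with polynomial accuracy, and evaluating $c_t$ costs polynomially many evaluations of $\phi$.

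The main obstacle is the first step: making sure the feedback to $\cA$ is a valid input for an external regret minimizer over $\Phi$. Convexity of $\Phi$ is needed so that averaging $\cA$'s randomized outputs gives an element whose expected fixed point still controls the cross term. Linearity is needed so that the expectation over $\cA$'s internal randomness passes through $c_t$. If $\cA$ outputs a distribution $\nu_t$ over $\Phi$, I would apply the EFP oracle to $\bar\phi_t(z)=\E_{\phi\sim\nu_t}\phi(z)$. The preceding argument then goes through verbatim, since $c_t$ is linear in $\phi$.
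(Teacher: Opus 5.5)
Your architecture is the standard GGM argument, and it matches the add-and-subtract template the paper uses to prove \Cref{thm:ol to mc}. However, there is a sign error that breaks the key step. Your displayed decomposition is a correct identity, since the factor $(-1)\cdot(-1)$ is just $+1$:
\[
\Phi\textup{-Regret}_T(\phi) = \sum_{t=1}^T \ell_t^\top \E_{z\sim\mu_t}\big[z-\phi_t(z)\big] + \sum_{t=1}^T \big(c_t(\phi)-c_t(\phi_t)\big).
\]
But the second sum is \emph{minus} $\sum_t \big(c_t(\phi_t)-c_t(\phi)\big)$, not equal to it. So if $\cA$ is a loss minimizer fed $c_t(\phi) = -\E_{z\sim\mu_t}[\ell_t^\top\phi(z)]$, the second sum equals $-\Regret^{\cA}_T(\phi)$. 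An upper bound on $\cA$'s regret then gives only a \emph{lower} bound on $\Phi$-regret.

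This is not cosmetic. Take $\Phi$ to be the constant maps $z\mapsto a$; their expected fixed points are distributions with mean $a$. A no-regret learner on $c_t(\phi_a)=-\ell_t^\top a$ guarantees $\sum_t \ell_t^\top a_t \ge \max_a \sum_t \ell_t^\top a - o(T)$, so the played actions have nearly \emph{maximal} cumulative loss. Already for a fixed loss vector that is non-constant on $\cZ$, the resulting $\Phi$-regret grows linearly in $T$. Your ``equivalently, utility'' sentence does not rescue this. Since $\E_{\mu_t}[\ell_t^\top(z-\phi(z))] = c_t(\phi) + \text{const}$, maximizing that utility corresponds to minimizing $-c_t$, not $c_t$.

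The repair is to feed $\cA$ the loss $\phi\mapsto \E_{z\sim\mu_t}[\ell_t^\top\phi(z)]$, i.e.\ the loss the deviation would have incurred. Up to an additive constant this is $-\E_{\mu_t}[\ell_t^\top(z-\phi(z))]$, mirroring $f_t$ in \Cref{algo:mc from ol}. With that choice the second sum is exactly $\Regret^{\cA}_T(\phi)$, and the rest of your argument goes through. Cauchy--Schwarz with the EFP guarantee bounds the first sum by $L\sum_t\eps_t$, giving $\Phi\textup{-Regret}_T(\phi)\le\Regret^{\cA}_T(\phi)+L\sum_t\eps_t$, and your efficiency accounting is fine.

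One minor point concerns the pointwise mixture $\bar\phi_t$. Applying the oracle to it requires $\bar\phi_t$ to lie in the class for which the oracle is assumed; being an endomorphism of $\cZ$ is not enough under the hypothesis as stated. So convexity of $\Phi$, or an oracle for $\co\Phi$, is doing real work there rather than being an optional alternative.
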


This reduction has been the \emph{de facto} algorithmic template for $\Phi$-regret minimization and its related equilibrium notions. As an example, the classical reduction from swap regret to external regret of \citet{blum2007external}, and from internal regret to external regret of \citet{stoltz2005internal}, are particular instantiations of this framework.\footnote{In \citet{blum2007external}, the expected fixed point computation subproblem is solved by computing the stationary distribution of a Markov chain.} Furthermore, as pointed out by \citet{hazan2007computational}, the use of expected fixed points is not accidental. Any efficient algorithm for no-$\Phi$-regret implies an efficient algorithm for solving expected fixed points for any $\phi$.

\subsection{Contributions and Techniques}
\begin{figure}[t]
    \centering%
    \begin{tikzpicture}
        \tikzstyle{box}=[draw, text width=3.5cm, align=center, fill=white, fill opacity=0.9,inner ysep=2mm, rounded corners=.2mm, semithick];
        \tikzstyle{group}=[fill, semithick, draw, fill opacity=0.08, dotted, inner sep=1.65mm, rounded corners=2mm]
        \tikzstyle{ar}=[draw, semithick, shorten <=.5mm, shorten >=.5mm, -{Stealth[scale=1.3]}];
        \node[box] (R1) at (-5,0) {External regret minimizer for $\cH$};
        \node[box] (R2) at (5,0) {External regret minimizer for $\Phi$};
        \node[box] (MC) at (-5,-3) {$\cH$-multicalibrated forecaster for $\cY$};
    
        \node[box] (GEN) at (-5,-5) {Outcome Indist., Omniprediction, $\dots$};
        \draw[ar,{Stealth[scale=1.3]}-{Stealth[scale=1.3]}] (MC) --node[right] {} (GEN);

        \node[box] (PHI) at (5,-3) {$\Phi$-regret minimizer for $\cX$};
    
        \draw[ar] (R1) --node[pos=.45,right,text width=2cm] {EVI} (MC);
        \draw[ar] (MC) --node[above] {Best Response} node[below] {} (PHI);
        \draw[ar] (R2) --node[pos=.45,right, text width=2cm] {Expected\\ fixed point} (PHI);


        \begin{pgfonlayer}{background}
            \node[group, dotted, blue, fill=blue!30!white, fit=(R2) (PHI)] (GGM) {};
            \node[blue, above, text width=5.3cm, align=center] at (GGM.north) {\citet{gordon2008no}};
        
            \node[group, red, solid, fit=(R1) (MC)] (NEW1) {};
            \node[red,above, text width=5.2cm, align=center] at (NEW1.north) {\Cref{thm:ol to mc}~(\Cref{sec:ol to mc})};
            
            \node[group, violet, solid, fill opacity=0.06, inner sep=3mm, fit=(MC) (PHI)] (NEW2) {};
            \node[violet,above, text width=5.2cm, align=center] at (NEW2.north) {\Cref{thm:mc to phi}~(\Cref{sec:mc to phi})};
        \end{pgfonlayer}
    \end{tikzpicture}
    \caption{Conceptual overview of results. Our first result reduces online multicalibration to online learning via EVIs. The second reduces $\Phi$-regret minimization to multicalibration using best responses. Together, these open a new path from regret minimization to $\Phi$-regret based on forecasting that has various computational advantages.}
    \label{fig:reductions}
\end{figure}
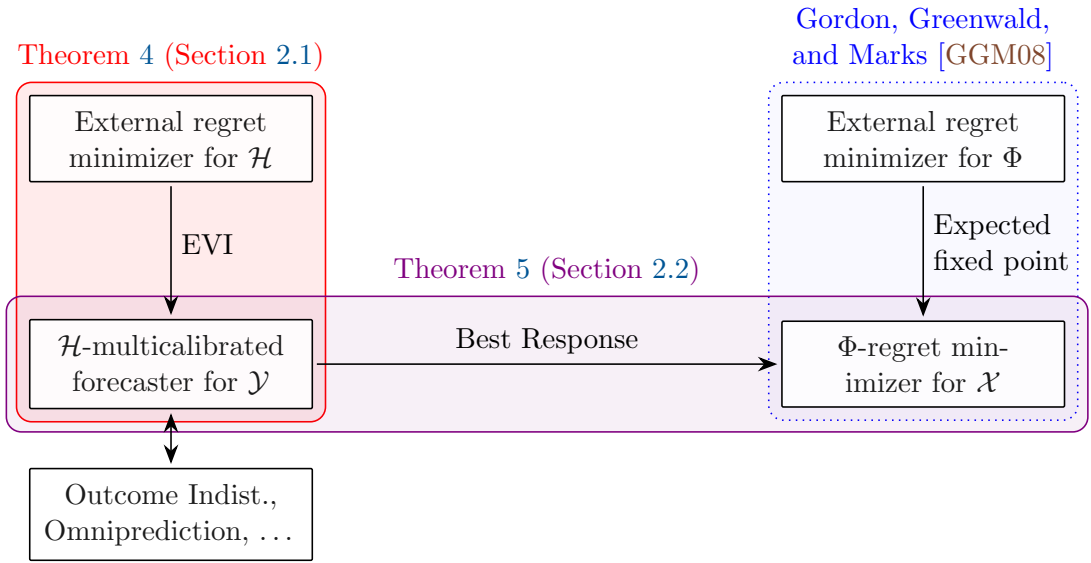

\paragraph{A GGM-Style Reduction for Online Multicalibration.} Our first result establishes an analogous result to that of \cite{gordon2008no}, but this time between online learning and multicalibration, rather than $\Phi$-regret. We show that multicalibration with respect to a set $\cH$ can be efficiently reduced, in a black-box fashion, to online learning over the class $\cH$ using a nonlinear optimization tool called an \emph{expected variational inequality} (EVI).

EVIs have appeared in different literatures with different names, including
``outgoing minimax problems'' \citep{foster2021forecast}, ``accuracy certificates'' \citep{nemirovski2010accuracy}, or ``negative correlation search'' \cite{perdomo2025defense}.
Given a function $S:\cY \to \R^d$ from a set $\cY\subset \R^d$ to $\R^d$ and error $\eps > 0$, the solution to the corresponding EVI is a distribution $\cD$ over $\cY$ such that 
\begin{equation}\tag{EVI}\label{eq:evi body}
    \E_{p \sim \cD} \Big[S(p)^\top (y-p)\Big] \le \eps \qquad \forall y \in \cY.
\end{equation}
As we recall in \cref{fact:EVI} below, EVIs can be solved efficiently, with runtime scaling as $\log(1/\eps)$ as a function of the desired precision $\eps$ for any convex and compact set $\cY$ and general (nonmonotone, or even discontinuous) operator $S$, under minimal assumptions. 

The following theorem summarizes our first main result.

\begin{theorem}[Informal]
The following statements are true for a general high-dimensional outcome space $\cY \subseteq \R^d$ and function class $\cH \subseteq \{\cX \times \cY \to \R^d\}$.
\begin{itemize}
    \item (Online Learning and EVI $\to$ Multicalibration) Let $\mathcal{A}$ be any online optimization algorithm over the class $\cH$ with external regret bounded by $\Regret_T$. 
    When paired with an EVI solver, we obtain an online algorithm whose multicalibration error with respect to $\cH$ is at most $\Regret_T$.
    
    \item (Multicalibration $\to$ EVI) Let $\cA$ be any algorithm that efficiently guarantees online multicalibration with respect to $\cH$, then $\cA$ can be black-box converted into an efficient algorithm for solving EVIs over $\cH$.
\end{itemize}
\end{theorem}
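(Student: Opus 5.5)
For the first bullet I would follow the GGM template with an online learner that plays test functions. At each round $t$, after Nature reveals $x_t$, the learner $\cA$ over $\cH$ outputs some $h_t \in \cH$. I then set $S_t(p) \coloneqq h_t(x_t,p)$, an operator from $\cY$ to $\R^d$. An EVI solver applied to $S_t$ with precision $\eps_t$ returns a distribution $\cD_t$ such that
\[
\E_{p\sim\cD_t}\big[h_t(x_t,p)^\top(y-p)\big] \le \eps_t \qquad \text{for all } y\in\cY .
\]
Once $y_t$ is revealed, I feed $\cA$ the reward function
\[
u_t(h) \coloneqq \E_{p\sim\cD_t}\big[h(x_t,p)^\top(y_t-p)\big].
\]
This function is linear in $h$, and $\cA$ maximizes cumulative reward, that is, minimizes regret against the loss $-u_t$. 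The regret guarantee of $\cA$ then gives, for every $h\in\cH$,
\[
\MCerr_T(h)=\sum_t u_t(h)\le \sum_t u_t(h_t)+\Regret_T .
\]
The EVI property evaluated at $y=y_t$ gives $u_t(h_t)\le\eps_t$. Hence $\MCerr_T\le \Regret_T+\sum_t\eps_t$. Choosing $\eps_t=1/T$, which is cheap because EVI runtime scales with $\log(1/\eps)$, yields error $\Regret_T+1$. The bookkeeping I still need to settle is which feedback model $\cA$ requires. In the full-information setting, $u_t$ is specified by the finite support of $\cD_t$ together with $x_t$ and $y_t$, so it can be handed to $\cA$ directly.

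For the second bullet (multicalibration $\to$ EVI), I would follow the Hazan–Kale style argument. The input is an operator $S$ that arises as $h(x,\cdot)$ for some $h\in\cH$ and some fixed $x$. I run the multicalibrated algorithm $\cA$ for $T$ rounds, with Nature always presenting this same $x$. The outcomes are chosen adversarially so that they certify a violation of the EVI. Concretely, after $\cA$ outputs $\cD_t$, I compute a best response
\[
y_t\in\argmax_{y\in\cY}\ \E_{p\sim\cD_t}\big[S(p)^\top(y-p)\big].
\]
This objective is linear in $y$, so the maximization is a linear optimization over the convex compact set $\cY$ and is efficient. Multicalibration with respect to $h$ gives $\sum_t \mathrm{gap}_t\le \MCerr_T$, where $\mathrm{gap}_t$ denotes the maximal violation at round $t$. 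It follows that some round $t$, and also the uniform mixture $\bar\cD$ of the $\cD_t$ (because the EVI condition is linear in $\cD$ for each fixed $y$), satisfies the EVI with $\eps=\MCerr_T/T$. Outputting $\bar\cD$ is the cleaner choice. Taking $T$ large enough that $\MCerr_T/T\le\eps$, the total runtime is polynomial whenever $\cA$ is efficient and $\MCerr_T$ is sublinear with polynomial rate.

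I expect the main obstacle to lie in the first direction. The issue is making precise the regret model under which $\cA$ operates over $\cH$, in particular whether it requires linear or convex rewards and how it accesses the rewards $u_t$. Relatedly, the EVI solver must be able to handle a possibly discontinuous operator $S_t$ under the minimal assumptions stated in \cref{fact:EVI}, and this requires checking that $h_t(x_t,\cdot)$ meets those assumptions (boundedness and evaluability).
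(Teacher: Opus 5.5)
Your proposal is correct and follows the paper's proof for both directions. The first bullet is exactly \Cref{algo:mc from ol}: feed the learner $f_t=-u_t$, evaluate the EVI inequality at $y=y_t$, and sum to get $\MCerr_T(h)\le\Regret_T(h)+\EVI_T$ as in \Cref{thm:ol to mc}; the second bullet is the paper's construction of fixing $x_t=x$, best-responding with $y_t$, and outputting the uniform mixture of the $\cD_t$ (your observation that some single round $\cD_t$ already achieves error $\MCerr_T/T$ is a valid minor refinement that keeps the support small).
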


The reduction and its proof fit in a few lines (see \Cref{thm:ol to mc}), and are fully black-box. It makes no assumptions on $\cH$, works for any convex compact $\cY$, does not rely on binning arguments, and is hence amenable to general high-dimensional problems. As an immediate consequence, it resolves the high-dimensional version of one of the open questions in \citet*{garg2024oracle}, who asked whether oracle-efficient multicalibration algorithms with $\sqrt{T}$ rates exist. 

As with the celebrated GGM theorem for $\Phi$-regret, this general reduction has several implications for the algorithmic landscape of online multicalibration. To start, we show how previous algorithms with lengthy, elaborate proofs can be recovered and improved by selecting different online learning algorithms and EVI solvers in the reduction. These include the K29 algorithm, as proposed in \citet*{vovk2005defensive} and extended in \citet{kernelOI,farina2026defensive}, and the unbiased prediction algorithm of \citet*{noarov2023high}. Using the online learning algorithm from \cite{chen2021impossible} as the online learning, a special case of our reduction yields an algorithm which achieves $\ell_1$ or ECE calibration with respect to vector-valued outcomes at the optimal $\cO(T^{\frac{d+1}{d+2}})$ rate (\Cref{sec:ece}).

Beyond its conceptual simplicity and ability to recover known results, we also show how to use our reduction to derive simple algorithms for more complex multicalibrated forecasting tasks. 
In particular, by drawing on the well-developed literature on online learning, we design the first online multicalibration algorithms that succeed even in the face of delayed or censored outcomes. All of these follow as simple corollaries of \Cref{thm:ol to mc}. 

Furthermore, our contributions for online multicalibration flow through to yield improved algorithms for solution concepts that are implications of multicalibration, such as online omniprediction or outcome indistinguishability. In particular, our results yield an efficient black-box reduction for online multiclass omniprediction for general loss functions and hypothesis classes.

\paragraph{An Alternative Path to $\Phi$-regret.}
Our second main contribution runs in the opposite direction, from forecasting to decision-making.
\begin{theorem}[Informal]
Let $\sigma(p) \in \arg\min_{z \in \cZ} \; p^\top z$ denote the best-response map. For every deviation $\phi \in \Phi$, define the test function $h_\phi(p) \coloneqq \sigma(p) - \phi(\sigma(p))$, and let $\cH_\Phi \coloneqq \{h_\phi : \phi \in \Phi\}$. Then, any online forecaster of the losses $\ell_t$ that is multicalibrated with respect to the class $\cH_{\Phi}$ can be black-box converted to an efficient no-$\Phi$-regret algorithm by simply best-responding to the forecasts.
\end{theorem}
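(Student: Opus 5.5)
The idea is to run the multicalibrated forecaster on the loss sequence, play the best response to each forecast, and show that the $\Phi$-regret against any $\phi$ equals the multicalibration error of the test $h_\phi$ plus a term that is nonpositive by optimality of $\sigma$.

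\textbf{Setup.} We instantiate \Cref{eq:omc} with outcome space $\cY := \co(\cL)$, assumed compact and convex (otherwise we enlarge $\cL$ to a compact convex superset). Since the decision problem has no contexts, the feature space $\cX$ is trivial. At each round $t$, the forecaster outputs a distribution $\cD_t$ over forecasts $p_t\in\cY$. The learner plays $\mu_t := \sigma_\#\cD_t$, the law of $z_t=\sigma(p_t)$ with $p_t\sim\cD_t$. Nature then reveals $\ell_t$, which we feed to the forecaster as the outcome $y_t=\ell_t$. Computing $\mu_t$ only requires a linear optimization oracle over $\cZ$ applied to each support point of $\cD_t$, so the reduction is efficient and black-box.

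\textbf{Key decomposition.} Fix $\phi\in\Phi$ and write $z_t=\sigma(p_t)$. Adding and subtracting $p_t$,
\[
\ell_t^\top z_t-\ell_t^\top\phi(z_t) = (z_t-\phi(z_t))^\top(\ell_t-p_t) + p_t^\top z_t - p_t^\top\phi(z_t).
\]
The first term is exactly $h_\phi(p_t)^\top(y_t-p_t)$. In the second term, $z_t=\sigma(p_t)$ minimizes $p_t^\top z$ over $\cZ$, and $\phi(z_t)\in\cZ$, so $p_t^\top z_t - p_t^\top \phi(z_t)\le 0$ pointwise for every realization of $p_t$. Taking expectations over $p_t\sim\cD_t$ and summing over $t$ gives
\[
\Phi\textup{-Regret}_T(\phi)\le \MCerr_T(h_\phi)\le \MCerr_T .
\]
Sublinear multicalibration error with respect to $\cH_\Phi$ therefore yields sublinear $\Phi$-regret, with the same rate. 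If contexts are present, the identical argument applies to contextual deviations $\phi(x,\cdot)$ and the tests $h_\phi(x,p)=\sigma(p)-\phi(x,\sigma(p))$.

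\textbf{Main obstacle.} The algebra above is immediate. The delicate points are measurability and well-definedness of $\sigma$: we need a fixed (for example lexicographic) tie-breaking rule so that $\sigma$ is a single-valued function and $h_\phi$ is a genuine element of $\{\cY\to\R^d\}$. We also need that the multicalibration guarantee holds for this possibly discontinuous test class $\cH_\Phi$. That is acceptable here because the statement assumes a forecaster that is multicalibrated with respect to $\cH_\Phi$ as given.

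The substantive work, which lies outside this statement, is producing such a forecaster. By \Cref{thm:ol to mc} this amounts to no-regret learning over $\cH_\Phi$ plus an EVI solver, and $\cH_\Phi$ inherits its complexity from $\Phi$ composed with $\sigma$.
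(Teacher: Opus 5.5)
Your proposal is correct and follows essentially the same argument as the paper: push $\cD_t$ forward through $\sigma$, add and subtract $p_t$ to split $\ell_t^\top(z_t-\phi(z_t))$ into $h_\phi(p_t)^\top(\ell_t-p_t)$ plus $p_t^\top(\sigma(p_t)-\phi(\sigma(p_t)))\le 0$, then take expectations and sum over $t$. Your side remarks also match the paper's own, including deterministic tie-breaking for $\sigma$, working with a convex loss set (you take $\co(\cL)$, the paper assumes $\cL$ convex), and the contextual extension.
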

 
This theorem yields a substantially finer-grained reduction from decision-making to forecasting than the folklore connection between the demanding notion of swap regret and the demanding notion of $\ell_1$-calibration. As discussed previously, swap regret minimization is inefficient on general sets (e.g., \citet{daskalakis2024lower}), which begs the question: What is the richest class of deviations $\Phi$ that one can handle?
We address that question by establishing a more precise connection between deviations $\phi$ and multicalibration tests $h_\phi$, thus obtaining reductions in which the required multicalibration class scales naturally with the complexity of $\Phi$. This fine-grained reduction recovers the classical connection between strong calibration and swap regret only in the limit when $\Phi$ is the set of all deviations.

Together with the previous result, this theorem gives an alternative to the reduction of \citet{gordon2008no} (see \Cref{fig:reductions}). This route, grounded in forecasting and EVIs rather than fixed points, has features that make it appealing when the set of deviations $\Phi$ is large. Perhaps most importantly, unlike the GGM framework, this route does not require an understanding of the geometry of endomorphisms of the action space, dispensing with much of the technical complexity in recent extensions. We highlight that this leads to a substantially different construction for the recent STOC paper on linear swap regret \citep{daskalakis2025efficient}: we improve the rates, completely avoid the semiseparation machinery needed there to handle endomorphisms, and even obtain an extension to general RKHS deviation classes, yielding the first algorithm for that setting. Furthermore, we highlight that these results hold not just for $\Phi$-regret, but also for \emph{contextual} versions of $\Phi$-regret as we explain in \Cref{sec:reductions}.

We close this section by noting that online linear optimization is a special case of $\Phi$-regret minimization. Online linear optimization and $\Phi$-regret minimization are equivalent when the set $\Phi$ only contains a convex set of constant transformations. Therefore, our results show that given a general algorithm for online multicalibration, one can build a general algorithm for online linear optimization, and \textit{vice versa}, highlighting a computational equivalence between the two problems.

\subsection{Related Work}
\label{sec:related_work}

Our work lies at the intersection of various areas. Here, we briefly outline how our results build on and contribute to the most closely related work in each.

\paragraph{Multicalibration.} Multicalibration was introduced in the batch setting by \citet*{hebert2018multicalibration} as a definition of algorithmic fairness. As discussed previously, multicalibration has a wide range of interesting consequences, including outcome indistinguishability \citep{dwork2021outcome}, omniprediction \citep{omnipredictors,gopalan2022loss,kernelOI,okoroafor2025near}, and universal adaptability \citep{kim2022universal}. Within complexity theory, it implies stronger versions of the Dense Model theorem, the Hardcore Lemma, and other pseudorandomness notions \citep{dwork2023pseudorandomness,casacuberta2025global,silvia,lunjia}.

Batch notions of multicalibration are achievable via boosting algorithms that repeatedly call a weak agnostic learning oracle. These procedures are sample inefficient by at least a quadratic factor \citep{hebert2018multicalibration}. In the online setting, where $\sqrt{T}$ regret algorithms imply sample-efficient batch procedures, algorithms date back at least to \cite{vovk2005defensive} and \cite{foster2006calibration}, with increased activity following the work by \citet*{gupta2021online}. In the setting where outcomes are high-dimensional, algorithms are known for special cases such as when $\cH$ is any finite set of functions \citep{noarov2023high}, or any vector-valued reproducing kernel Hilbert space \citep{farina2026defensive}.

Prior work has established reductions between multicalibration and regret minimization in specific settings. \cite{haghtalab2023unifying} show a reduction for the restricted case of multiclass outcomes. However, their result is both white-box and exponential-time since it relies on constructing an $\eps$-net of the $d$-dimensional simplex. \cite{garg2024oracle} prove a reduction between swap multicalibration and online regression that works for the specific case of binary outcomes and yields $\omega(\sqrt{T})$ rates. They leave as their main open question how to achieve an efficient black-box reduction for (vanilla) online multicalibration that achieves $\sqrt{T}$ rates, a problem we solve not just for binary $y$ but for general high-dimensional outcomes. \cite{singla} show a similar reduction for the case of binary outcomes and $\ell_1$ multicalibration, a distinct notion from ours for which $\sqrt{T}$ rates are unachievable \cite{collina2026optimal}. Most recently, \citet{hu2025efficient} showed an oracle-efficient reduction from $\ell_p$ calibration to weak agnostic learning solving the open question in \cite{garg2024oracle} for the special case of binary outcomes. Their construction differs from ours in at least two important directions. First, their results rely on discretization and binning arguments. They are limited to binary outcomes and do not apply to general high-dimensional or real-valued outcomes $\cY$. Second, their reduction uses multiple learners which are trained conditionally, unlike our reduction which only uses one (\cref{algo:mc from ol}).

\paragraph{$\Phi$-Regret Minimization.} Virtually all algorithms for $\Phi$-regret minimization rely, either directly or indirectly, on the GGM result that uses fixed points. This is true of the celebrated no-swap-regret algorithm for normal-form games of \citet{blum2007external} and the no-internal-regret for normal-form games of \citet{stoltz2005internal}. Both are specific instantiations of the general reduction in \citet{gordon2008no}. GGM also applies to several important algorithms for extensive-form games, including constructions of efficient algorithms for extensive-form correlated, extensive-form coarse-correlated equilibria, and linear correlated equilibria \citep{morrill2021efficient,farina2022simple,farina2024polynomial}. 
In contemporary work, the GGM framework has been pushed to its limits. Recent work has resorted to highly sophisticated semi-separation arguments to achieve regret minimization via the GGM path with respect to ever-increasing sets of transformations $\Phi$ and general convex action sets \citep{daskalakis2025efficient,zhang2025learning,arunachaleswaran2025swap}. Our alternative path is simpler and achieves rates that match or improve upon those discussed here.

\paragraph{EVIs and Convex Sets.} Lastly, our reductions heavily rely on the ability to solve expected variational inequalities, as defined in \eqref{eq:evi body}. Recent work has developed efficient polynomial-time algorithms for solving these for any bounded operator $S$ on convex sets $\cZ$ given an oracle (an algorithm that tests membership, performs separation, or linear optimization over $\cZ$, all of which are known to be polynomial-time-reducible to each other under minimal regularity conditions \citep{Grotschel1993,grotschel1981ellipsoid}) for $\cZ$. In particular, the only assumption that we make is that $\cZ$ has been shifted and scaled appropriately so that the following condition holds.

\begin{definition}[Well-bounded set]\label{def:wr}
We say that a convex and compact set $\cZ \subseteq \R^d$ is well-bounded if it contains in its relative interior a unit-radius Euclidean ball, and $\cZ$ is contained in a Euclidean ball of radius $\poly (d)$ centered in the origin. \end{definition}

For any convex body $\cZ$ for which radii $r, R > 0$ are known such that a ball of radius $r$ is contained in the interior of $\cZ$ and $\cZ$ is contained in the ball of radius $R$ centered in the origin, a transformation that makes $\cZ$ well-bounded can be computed in time polynomial in $\log (R/r)$ using a polynomial number of oracle calls \citep[Corollary 4.6.8]{Grotschel1993}. The assumption of well-boundedness allows us to ignore the terms $\log(R/r)$, and simply replace them with $\poly(d)$. Under well-boundedness assumptions, we recall the following result.

\begin{fact}[\cite{zhang2025expected}]
\label{fact:EVI}
Given a well-bounded convex compact set $\cZ \subseteq \R^d$ and an operator $S:\cZ \to \R^d$ such that $\sup_{z\in \cZ} \|S(z)\|\leq B$, there exists an efficient algorithm that solves the EVI problem over $S$ to error $\eps$ in  $\cO(\poly(d,\log(B/\eps),\mathrm{eval}(S)))$ time, where $\mathrm{eval}(S)$ is an upper bound on the time it takes to evaluate $S$. For any $\eps> 0$, the algorithm returns a distribution $\cD$ such that $\forall y \in \cZ$,
\begin{align*}
    \E_{p \sim \cD} \Big[S(p)^\top (y-p)\Big] \le \eps,
\end{align*}
where $\cD$ has positive mass on at most  $\cO(\poly(d, \log(B/\eps)))$ points.
\end{fact}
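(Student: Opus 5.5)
The plan is an ``ellipsoid against hope'' argument in the style of Papadimitriou--Roughgarden: run the ellipsoid method on the dual of the EVI to collect polynomially many candidate points, then solve a finite LP over distributions on those points.

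\textbf{Existence.} First I would check that an exact solution exists; this also tells us what the dual looks like. Consider the bilinear game $\E_{p\sim\cD}[S(p)^\top(y-p)]$. The learner picks a distribution $\cD$ over $\cZ$, and the adversary picks $y\in\cZ$. For a fixed $\cD$ the payoff is affine in $y$. So a mixed strategy of the adversary collapses to its mean $\bar y\in\cZ$, by convexity of $\cZ$. Against any fixed $\bar y$, the learner plays $\cD=\delta_{\bar y}$ and gets payoff $S(\bar y)^\top(\bar y-\bar y)=0$. A minimax theorem (compactness of $\cZ$ suffices on the adversary's side) then gives value $\le 0$. The algorithmic question is therefore only how to find a polynomially supported $\eps$-solution using membership/separation oracles for $\cZ$ and evaluations of $S$.

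\textbf{Dual ellipsoid.} I would run the ellipsoid method on the dual feasibility problem: find $y\in\cZ$ with $S(p)^\top(y-p)\ge\eps$ for all $p\in\cZ$. By the argument above this system is infeasible, since $p=y$ gives $0\ge\eps$. At each iteration, let $y_k$ be the current center.
\begin{itemize}
\item If $y_k\notin\cZ$, cut with the separating hyperplane supplied by the oracle for $\cZ$.
\item Otherwise, set $p_k\coloneqq y_k$ and cut with the halfspace $\{y: S(p_k)^\top(y-p_k)\ge \eps\}$. The center violates this constraint, so this is a valid central cut.
\end{itemize}
Start from the $\poly(d)$-radius ball containing $\cZ$, and run $K=\poly(d,\log(B/\eps))$ iterations, until the ellipsoid volume drops below a threshold $v_{\min}$ specified next. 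Each iteration costs one oracle call plus one evaluation of $S$, which gives the claimed runtime.

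\textbf{Certificate from the collected points (main obstacle).} I expect the main difficulty to be turning ``the ellipsoid got small'' into a genuine primal solution supported on $\{p_1,\dots,p_K\}$.

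Suppose no distribution on these points achieves $\max_{y\in\cZ}\E_{\cD}[S(p)^\top(y-p)]\le\eps/2$. Apply finite LP duality/minimax: the learner now has finitely many pure strategies, and the adversary's strategy set $\cZ$ is convex and compact. This gives a $y^\star\in\cZ$ with $S(p_k)^\top(y^\star-p_k)>\eps/2$ for every $k$.

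Now I use well-boundedness to get volume. Shrink toward the unit ball $\mathbb{B}\subseteq\cZ$: the set $(1-\lambda)y^\star+\lambda\mathbb{B}$ lies in $\cZ$. Each constraint is $B$-Lipschitz in $y$, and $\cZ$ has radius $\poly(d)$. So choosing $\lambda=\eps/\poly(d)B$ keeps every constraint above $\eps/4$ on this whole set. The set has volume at least $\lambda^d\cdot\mathrm{vol}(\mathbb{B})$, and it is never cut by the $\cZ$-separation steps or by the constraint cuts. I would rerun the dual ellipsoid with threshold $\eps/4$ in place of $\eps$, and set
\[
v_{\min} \coloneqq \lambda^d\cdot\mathrm{vol}(\mathbb{B}).
\]
The uncut set above must then survive inside every ellipsoid, contradicting the volume shrinkage after $K=\cO(d^2\log(\poly(d)B/\eps))$ steps. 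The care needed here is getting the margins consistent and, if $\cZ$ is not full-dimensional, working in its affine hull (where the relative-interior ball lives).

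\textbf{Primal LP.} Finally, I would solve the finite LP over weights $w\in\Delta_K$:
\[
\min_{w\in\Delta_K}\ \max_{y\in\cZ}\ \sum_k w_k S(p_k)^\top(y-p_k).
\]
The inner maximum is a linear optimization over $\cZ$, which is polynomial-time-reducible to the given oracle. So this LP can be solved by ellipsoid, or by a regret-based method, to accuracy $\eps/2$ in $\poly(d,\log(B/\eps))$ time. The result is a distribution $\cD$ supported on the $K=\poly(d,\log(B/\eps))$ points $p_k$ that satisfies the EVI to error $\eps$.
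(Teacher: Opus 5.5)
Your proposal is correct and takes the route the paper points to: it is the ellipsoid-against-hope argument of \cite{zhang2025expected} specialized to constant deviations, where the violated dual constraint at an in-$\cZ$ center $y_k$ comes from $p_k=y_k$ itself, followed by the finite minimax certificate, the well-boundedness volume bound, and a final LP over the collected points. One small slip: your minimax argument only shows the value is at most $0$ (so $\eps$-solutions exist for every $\eps>0$), not that an exact solution exists---exact solutions can fail to exist for discontinuous $S$---but your argument never uses exactness.
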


This fact, shown in \cite{zhang2025expected}, uses a generalization of the ellipsoid-against-hope algorithm \cite{farina2024polynomial,papadimitriou2008computing, jiang2011polynomial}. However, slower algorithms that avoid the ellipsoid method also exist. For example, one can always solve EVIs via online linear optimization over $\cZ$. This regret minimization procedure solves EVIs to error $\eps$ in  $\cO(\poly(d,B,\mathrm{time}(S))/\eps^2))$ time and returns a distribution with support size $\cO(\poly(d,B))$.
Both algorithmic paths (the fast one of \cref{fact:EVI} and the slower approach based on regret minimization) are essentially enabled by algorithmic constructions of the minimax theorem (see also \citep{sigecom} for a high-level discussion).

\paragraph{Blackwell Approachability.} Dating back (at least) to the work of \citet{foster1999proof}, tools from Blackwell approachability \cite{blackwell1956analog} have been repeatedly used as subroutines to build online algorithms for calibration and multicalibration via tailored, problem-specific constructions.
For the restricted case of 1-dimensional outcomes, \citet{okoroafor2024faster,okoroafor2025near, perdomo2025defense}
encode finitely many discretized (multi)calibration conditions as the coordinates
of a vector-valued game and employ approachability to achieve online calibration. Going beyond finite constraints,
\citet{perchet2013approachability} defines a measure over calibration constraints and uses infinite-dimensional
approachability for weighted average of these constraints. These results, however, rely on binning and discretization arguments, and are hence not computationally efficient in high-dimensional settings. Furthermore, the Blackwell game usually has one payoff for each test, putting into question this route for richer test sets such as the RKHS tests we discuss in \Cref{sec:defensive} and use in \Cref{sec:applications Phi}. In contrast, our reductions are computationally efficient even for vector-valued outcomes, and can guarantee multicalibration with respect to \textit{every} function in an infinitely large class of tests (see the example in \Cref{sec:ol to mc}). 

On a more conceptual level, our work establishes black-box reductions between online linear optimization (OLO is a special case of $\Phi$-regret) and online high-dimensional multicalibration, highlighting a new computational equivalence between these problems. This result complements prior work by \citet{abernethy2011blackwell} who established an equivalence between Blackwell approachability and online linear optimization. Tying both sets of results together, our EVI-based reductions highlight how Blackwell approachability is not just a bespoke tool for the design of specific calibration algorithms, but rather computationally equivalent to the most general version of this class of problems (that is, online high-dimensional multi-calibration).

\section{A GGM-Like Theorem for Multicalibration, and Finer-Grained Connections between Decision-Making and Forecasting}
\label{sec:reductions}

In this section, we state our first result showing that multicalibration can be black-box reduced to online learning, and that $\Phi$-regret minimization can be black-box reduced to multicalibration. These reductions enable a number of algorithmic corollaries, which we discuss in \Cref{sec:mc_implications,sec:applications Phi}. 

\subsection{Reducing Multicalibration to Online Learning via EVIs}
\label{sec:ol to mc}
\begin{algorithm}[t]
    \caption{Black-box reduction from  online learning over $\cH$ to $\cH$-multicalibration}
    \label{algo:mc from ol}
    \DontPrintSemicolon
    \KwData{A no-regret learner $\cA$ over the set $\cH \subseteq \{ \cX \times \cY \to \R^d \}$}
    \For{$t=1,2,\dots$}{
    \hspace{10pt}Nature reveals $x_t$\;
    \hspace{10pt}Regret minimizer $\cA$ for the set $\cH$ picks next $h_t$\;
    \hspace{10pt}Learner outputs the distribution $\cD_t$ over predictions $p_t \in \cY$ that solves the EVI,
    \begin{equation}\label{eq:evi}
        \E_{p\sim\cD_t} \Big[h_t(x_t, p)^\top (y - p)\Big] \le \eps_t \quad\forall y \in \cY.
    \end{equation}\;
    \vspace{-5mm}
    \hspace{10pt}Nature reveals outcome $y_t \in \cY$\;
    \hspace{10pt}Regret minimizer incurs loss $f_t(h)= -\E_{p_t\sim\cD_t}\Big[ h(x_t,p_t)^\top (y_t - p_t) \Big]$}
\end{algorithm}

Let $\cX$ be a set of features, $\cY \subseteq \R^d$ be a compact, convex set of outcomes, and $\cH \subseteq \{\cX \times \cY \to \R^d \}$ be a set of bounded test functions. Let $\cA$ be any no-regret learner for $\cH$, that is, an algorithm that selects $h_t$, observes linear losses $f_t$, and guarantees sublinear external regret
\begin{align}
\label{eq:ext_regret}
     \Regret_T(h) \coloneqq \sum_{t=1}^T f_t(h_t) -  f_t(h)
\end{align}
with respect to any $h \in \cH$. We again let $\Regret_T = \sup_{h \in\cH} \Regret_T(h)$.

\Cref{algo:mc from ol} shows how $\cA$ can be black-box converted into an online multicalibration procedure over $\cH$. The idea is simple. We use $\cA$ to determine which test function in $\cH$ to prioritize in each round. Given the current context $x_t$, and the test $h_t$ selected by $\cA$, we convert $h_t$ into a distribution $\cD_t$ over the forecasts $p_t \sim \cD_t$ by solving the EVI corresponding to $h_t(x_t, \cdot)$, as shown in \eqref{eq:evi}; we recall that EVIs can be solved efficiently under minimal assumptions (see \cref{fact:EVI}). Finally, upon receiving the actual outcome $y_t \in \cY$ from the environment, we supply the loss $f_t(h)$ (which is always linear in $h$) back to the no-regret algorithm $\cA$.

The following theorem is our first main result establishing the correctness of this reduction.

\begin{theorem}\label{thm:ol to mc}

    Fix $h \in \cH$ and let $\Regret_T(h)$ denote the regret cumulated by the external regret minimizer $\cA$  (as in \Cref{eq:ext_regret}) 
    where $f_t$ are defined as in \Cref{algo:mc from ol} and $\mathrm{EVI}_T \coloneqq \sum_{t=1}^T \epsilon_t$ is the sum of the EVI errors from \Cref{eq:evi}, then the sequence of distributions $\cD_t$ output by the Learner in \Cref{algo:mc from ol} satisfy the following inequality,
    \[
        \MCerr_T(h) \le \Regret_T(h) + \mathrm{EVI}_T.
    \]
\end{theorem}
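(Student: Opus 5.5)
The argument is a direct unrolling of definitions, essentially one line. It rests on two facts: the losses $f_t$ fed to $\cA$ are exactly the negated per-round multicalibration errors, and the EVI condition controls the learner's own loss each round.

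First, rewrite the multicalibration error against a fixed $h\in\cH$ via the losses. By the definition of $f_t$ in \Cref{algo:mc from ol},
\[
\MCerr_T(h) = \sum_{t=1}^T \E_{p_t\sim\cD_t}\big[h(x_t,p_t)^\top(y_t-p_t)\big] = -\sum_{t=1}^T f_t(h).
\]
Next, add and subtract the learner's losses, using the regret definition \eqref{eq:ext_regret}:
\[
-\sum_t f_t(h) = \sum_t \big(f_t(h_t)-f_t(h)\big) - \sum_t f_t(h_t) = \Regret_T(h) + \sum_{t=1}^T \E_{p\sim\cD_t}\big[h_t(x_t,p)^\top(y_t-p)\big].
\]

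Finally, bound each term of the last sum. The distribution $\cD_t$ solves the EVI \eqref{eq:evi} for the operator $h_t(x_t,\cdot)$, and that inequality holds uniformly over all $y\in\cY$. It therefore holds in particular at the realized outcome $y=y_t\in\cY$, which gives
\[
\E_{p\sim\cD_t}\big[h_t(x_t,p)^\top(y_t-p)\big]\le\eps_t .
\]
Summing over $t$ yields $\mathrm{EVI}_T$, and hence $\MCerr_T(h)\le\Regret_T(h)+\mathrm{EVI}_T$.

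There is no genuine obstacle, but one point needs care: $\cD_t$ must be chosen before $y_t$ is revealed. This is legitimate precisely because the EVI guarantee is uniform in $y$, so it applies to whatever outcome Nature later reveals, even adaptively. It also needs $h_t$ to be chosen before $\cD_t$, which matches the ordering in \Cref{algo:mc from ol}. I would also note that $f_t$ is linear in $h$, so any standard regret guarantee for $\cA$ applies.
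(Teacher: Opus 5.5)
Your proposal is correct and follows essentially the same argument as the paper. The paper adds the nonnegative slack $\eps_t - \E_{p_t\sim\cD_t}[h_t(x_t,p_t)^\top(y_t-p_t)]$ to each round's error and regroups the sum into $\sum_t (f_t(h_t)-f_t(h)) + \sum_t \eps_t$, which is the same as your add-and-subtract of $f_t(h_t)$ followed by the EVI inequality evaluated at the realized outcome $y=y_t$.
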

\begin{proof}
The key is that the EVI property guarantees that
\[
    0 \le \eps_t- \E_{p_t\sim\cD_t} \Big[h_t(x_t, p_t)^\top (y_t - p_t)\Big].
\]
Therefore,
\begin{align*}
\sum_{t=1}^T \E_{p_t\sim\cD_t} \Big[ h(x_t,p_t)^\top (y_t - p_t) \Big]    & \leq  \sum_{t=1}^T \E_{p_t\sim\cD_t} \Big[ h(x_t,p_t)^\top (y_t - p_t)\Big] - \E_{p_t\sim\cD_t} \Big[h_t(x_t, p_t)^\top (y_t - p_t)\Big]  + \eps_t\\
    &= \left(\sum_{t=1}^T f_t(h_t) - f_t(h)\right) + \sum_{t=1}^T \eps_t \\
    &=  \Regret_T(h) + \mathrm{EVI}_T.\qedhere
\end{align*}
\end{proof}
Since $\EVI_T$ can always be chosen to be less than $\Regret_T$, we get that if $\cA$ guarantees that $\Regret_T$ is $o(T)$, then the multicalibration error with respect to $\cH$ is also $o(T)$.

Before we discuss the necessity of EVIs, we briefly pause to illustrate how \cref{thm:ol to mc} can be used to easily design online multicalibration algorithms in two important settings. We keep the discussion short and informal to focus on the overarching idea and defer a detailed presentation of (stronger versions of) these results to \Cref{sec:mc_implications}.

\paragraph{Multicalibration with respect to finite sets.}
To start, suppose we would like to produce forecasts of high-dimensional outcomes $y \in \cY \subset \R^d$ that are multicalibrated with respect to all functions in the convex hull of a \emph{finite} set of tests $\cH$. That is, $\cH = \co\{h^{(1)}, \dots, h^{(n)}\}$ for some $n \in \mathbb{N}$. We assume that evaluating each $h^{(i)}$ incurs a cost polynomial in the dimension $d$, and that $|h^{(i)}(x,p)^\top (y - p)| \le 1$ for all $i \in [n]$, $x \in \cX$, and $y,p\in\cY$.

Applying \cref{algo:mc from ol} using Hedge as the no-regret algorithm over $\cH$ yields a  forecaster for $\cY$ that is multicalibrated with respect to each $h\in \cH$, that runs in time $\cO(\poly(nd)\log t)$ at every iteration $t$, and achieves multicalibration error 
\begin{align}
\label{eq:hedge_example}
\MCerr_T(h) \leq \cO(\sqrt{T \log n}).
\end{align}
In more detail, at every time $t$, Hedge outputs a function $h_t$ which is a convex combination of the functions in $\cH$, $h_t = \lambda^{(1)}_{t} h^{(1)} + \dots + \lambda^{(n)}_{t} h^{(n)} \in \cH$. Evaluating such $h_t$ incurs a cost $\cO(\poly(nd)).$ Using the EVI algorithm of \cite{zhang2025expected}, we can then solve each EVI induced by $h_t(x_t, \cdot)$ to precision $1/t$ in time $\cO(\poly(nd)\log t )$, yielding the distribution $\cD_t$ with support size $\cO(\poly(d)\log t)$ and $\EVI_T = \sum_{t=1}^T 1/t \leq \cO(\log(T))$.

Upon observing the realized $y_t \in \cY$, we can then update the convex combination coefficients $\lambda_{t,1}, \dots, \lambda_{t,n}$ according to the Hedge algorithm. In particular, in light of the definition of $f_t$ in \Cref{algo:mc from ol}, each coefficient is updated according to
\[
    \lambda^{(i)}_{t+1} \propto \lambda^{(i)}_{t} \cdot \exp\bigg\{ \eta \E_{p_t \sim \cD_t} \Big[ h^{(i)}(x_t, p_t)^\top (y_t - p_t) \Big] \bigg\}
\]
for some appropriate learning rate $\eta > 0$.
Since $\cD_t$ has support $\cO(\poly(d)\log t)$, the update can be carried out in time $\cO(\poly(nd)\log t)$. 
Invoking \cref{thm:ol to mc} with the known regret bound of Hedge, we obtain the bound in \Cref{eq:hedge_example}.

\paragraph{Multicalibration with respect to infinitely many tests.}

The reduction we devised is useful well beyond (convex hulls of) discrete tests. It applies just as easily to \emph{infinite}, structured families of test functions, such as all functions that lie in the unit ball of a vector-valued reproducing kernel Hilbert space. This setting includes, for example, multicalibration with respect to all low-degree polynomials, decision trees, or in the case where $\cY=[0,1]$, all Lipschitz continuous functions of the forecast $p$ \citep{kernelOI,vovk2007k29} (this last case is known as \emph{smooth calibration} \cite{kakade2008deterministic}).

To keep the example concrete, suppose we are interested in building an online forecaster of high-dimensional outcomes that is multicalibrated with respect to all polynomials in $(x,p)$ of degree up to $r \in \mathbb{N}$. Here, $x \in \cX \subseteq \R^m$ and $p \in \cY \subseteq \R^d$. Specifically, consider the feature map $\varphi(x, p)$ listing all monomials in $(x,p)$ of degree up to $r$. Every vector-valued function $h(x,p)$ whose $i$th coordinate is any polynomial in $(x,p)$ of degree up to $r$ can be written as a linear function of $\varphi(x,p)$. That is, $h(x,p)=M \varphi(x,p)$. Assuming a bound on the norm of the coefficients of matrices $M$, we can guarantee multicalibration with respect to the set of functions
\[
    \cH \coloneqq \{(x,p)\mapsto  M \varphi(x, p) : \|M\|_F \le 1\}.
\]

In this case, the set of test functions is indexed by matrices $M$ which span a unit ball. We can then invoke \cref{algo:mc from ol} using a no-regret learner $\cA$ for online optimization over the $\ell_2$ ball (for example, projected gradient descent), and directly obtain an efficient $\sqrt{T}$ multicalibrated forecaster for all low-degree polynomial functions $h$ since the losses in \Cref{algo:mc from ol}, $f_t(h)=f_t(M)$, remain linear in $M$. We will return to this construction in \Cref{sec:mc_implications,sec:applications Phi}, where we illustrate how it extends well beyond low-degree polynomials.

\subsubsection{The Necessity of EVIs: Reducing EVIs to Online Multicalibration}
The previous theorem shows that one can construct a $\cH$-multicalibrated forecaster from a no-regret learner over $\cH$, provided that EVIs can be solved efficiently. Thus, EVIs are \emph{sufficient} to construct multicalibrated forecasters. A natural question, however, is whether they are also the ``right'' primitive for this problem.
In this subsection, we point out that EVIs are necessary as well, in the sense that the existence of an efficient $\cH$-multicalibrated forecaster automatically implies the existence of an efficient EVI solver over functions in $\cH$. In this respect, our result parallels the reduction of \citet{gordon2008no} between external regret and $\Phi$-regret, where (expected) fixed points are sufficient and necessary in the analogous sense \citep{hazan2007computational}.

To show necessity, suppose that we have an $\cH$-multicalibrated forecaster, that is, an algorithm that produces distributions $\cD_t$ such that for any $h\in \cH$ and arbitrary sequence $\{(x_t,y_t)\}_{t=1}^T$,
\[
    \frac{1}{T}\sum_{t=1}^T \E_{p_t\sim\cD_t} \Big[ h(x_t,p_t)^\top (y_t - p_t) \Big] \le \frac{1}{T}\MCerr_T
\]
where $\MCerr_T$ is $o(T)$. 
We show that the above algorithm can be turned into an algorithm for solving EVIs defined in terms of any $x$ and $h$ to any error $\eps>0$:
\[
    \E_{p\sim \cD} \Big[ h(x, p)^\top (y - p) \Big] \le \epsilon \qquad \forall y \in \cY.
\]
To see this, consider the scenario in which Nature picks $x_t = x$ at all times. Let $\cD_t$ be the distribution returned by the $\cH$-calibrated forecaster. We will let the $y_t$ be a best response 
\begin{align*}
y_t \coloneqq\argmax_{\hat{y} \in \cY} \hat{y}^\top \left[\E_{p_t \sim \cD_t} h(x,p_t)\right].
\end{align*}
Then, we have that for any fixed  $y_\star \in \cY$
\begin{align*}
\frac{1}{T}\MCerr_T &\ge \frac{1}{T} \sum_{t=1}^T  \E_{p_t\sim\cD_t} \Big[ h(x, p_t)^\top (y_t - p_t) \Big] 
\ge \frac{1}{T} \sum_{t=1}^T  \E_{p_t\sim\cD_t} \Big[ h(x, p_t)^\top (y_\star - p_t) \Big]  .
\end{align*}
As $T\to\infty$, $\MCerr_T/T\to 0$ and the uniform mixture $\cD$ over $\cD_1, \dots, \cD_T$ is a solution to the EVI defined by $h(x,\cdot)$.

\subsection{$\Phi$-Regret Minimization from Online Multicalibration}
\label{sec:mc to phi}

In this subsection, we provide a \emph{fine-grained} reduction between
$\Phi$-regret minimization and multicalibrated forecasting. The
connection between swap regret (the special case where $\Phi$
contains all endomorphisms on $\cZ$) and best responses to $\ell_1$-calibrated forecasters is
by now folklore, dating at least to \citet{foster1997calibrated}.
However, recent work has shown that swap regret is hard to
minimize efficiently beyond the case where $\cZ$ is the simplex, motivating the study of more structured deviation classes $\Phi$ \citep{daskalakis2024lower}. Furthermore, $\ell_1$ notions of calibration are strictly harder to achieve \cite{collina2026optimal}, motivating the question of whether they are strictly necessary to guarantee low regret.

Our main contribution here establishes that less demanding notions of $\Phi$-regret are efficiently achievable via correspondingly less demanding notions of multicalibration, making this celebrated connection between prediction and decision-making more precise. Specifically, if the goal is to guarantee low regret against a set of deviations $\Phi$, we show that it suffices to best respond to forecasts of the losses that are multicalibrated with respect to the following set of tests:
\begin{equation}
\label{eq:H_PHI_DEF}
    \cH_\Phi \coloneqq \{(x, p) \mapsto \sigma(p) - \phi(\sigma(p)): \phi \in \Phi\}, \qquad \sigma(p) \coloneqq \argmin_{z \in \cZ} p^\top z.
\end{equation}
We assume that ties in the best response problem on the right are broken deterministically according to some arbitrary scheme, so that $\sigma$ is a function. Quite intuitively, as the set of deviations $\Phi$ against which regret is minimized grows, so grows the corresponding set $\cH_\Phi$ of multicalibration tests.
 We discuss in \Cref{sec:applications Phi} how this fine-grained reduction enables efficient guarantees for rich but structured families of deviations, representing a step change in the simplicity of the algorithms for achieving no-$\Phi$-regret against large classes of deviations $\Phi$, and extending naturally to settings in which $\Phi$ is an RKHS ball.

To achieve $\Phi$-regret minimization from $\cH_\Phi$-multicalibration, a decision maker forecasts the next loss (out of the set $\cL$ of losses that could be incurred), and best responds to it. We provide a formal description in \Cref{algo:phi from mc} and prove its correctness in the following theorem which constitutes our second main result.

\begin{algorithm}[t]
    \caption{Black-box reduction from $\cH_\Phi$-multicalibration to $\Phi$-regret minimization}
    \label{algo:phi from mc}
    \DontPrintSemicolon
    \KwData{Convex loss set $\cL\subset \R^d$, convex strategy set $\cZ\subseteq\R^d$, family of deviations $\Phi \subseteq \{\cZ\to \cZ\}$. $\cA$ forecaster for $\cL$ multicalibrated with respect to the set of tests $\cH_\Phi$ defined in \eqref{eq:H_PHI_DEF}}
    \For{$t=1,2,\dots$}{
    Query the forecaster $\cA$ and receive distribution $\cD_t$ over forecasts $p_t \in \cL$\;
    The decision maker chooses the pushforward distribution $\mu_t$ of strategies
    \[
        z_t = \BR(p_t) \coloneqq \argmin_{z \in\cZ}
        ~p_t^\top z, \qquad\qquad \text{where }p_t \sim \cD_t
    \]
    \vspace{-5mm}\;
    Nature reveals the true loss vector $\ell_t \in \cL$\;
    Feed the forecaster the true outcome $y_t = \ell_t$\;}
\end{algorithm}

\begin{theorem}
\label{thm:mc to phi}
Let $\cD_t$ be the sequence of forecast distributions produced in \Cref{algo:phi from mc}, and let $\mu_t$ be the induced distribution over actions $z_t = \BR(p_t)$ where $p_t \sim \cD_t$. Then, the $\Phi$-regret of the resulting decision-making algorithm is upper bounded by the $\cH_\Phi$-multicalibration error of the forecaster. For any $\phi \in \Phi$, if we define $h_\phi(p)= \BR(p) - \phi(\BR(p))$, then:
\[
    \Regret^\Phi_T(\phi) \coloneqq \sum_{t=1}^T \E_{z_t \sim \mu_t}
    \Big[
        \ell_t^\top (z_t - \phi(z_t))
    \Big]
    \le
    \sum_{t=1}^T
    \E_{p_t \sim \cD_t}
    \Big[
        h(p_t)^\top (\ell_t - p_t)
    \Big] = \MCerr_T(h_\phi).
\]
In particular, sublinear multicalibration error with respect to $\cH_\Phi$  by the forecaster, guarantees sublinear $\Phi$-regret by the decision-maker, $\Regret^\Phi_T \leq \MCerr_T$.
\end{theorem}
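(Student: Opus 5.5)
The plan is to prove the inequality termwise, for each fixed $t$ and each realization $p_t \sim \cD_t$, and then take expectations and sum over $t$. Since $z_t = \BR(p_t)$ with $p_t \sim \cD_t$, the left-hand side can be rewritten as
\[
\sum_{t=1}^T \E_{p_t\sim\cD_t}\Big[\ell_t^\top\big(\BR(p_t) - \phi(\BR(p_t))\big)\Big] = \sum_{t=1}^T \E_{p_t\sim\cD_t}\Big[\ell_t^\top h_\phi(p_t)\Big].
\]
This is simply the pushforward definition of $\mu_t$. After this rewriting, we only need to compare $\ell_t^\top h_\phi(p_t)$ with $h_\phi(p_t)^\top(\ell_t - p_t)$ pointwise.

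The key step is the decomposition
\[
\ell_t^\top h_\phi(p_t) = h_\phi(p_t)^\top(\ell_t - p_t) + p_t^\top h_\phi(p_t).
\]
Together with the observation that the last term is nonpositive,
\[
p_t^\top h_\phi(p_t) = p_t^\top \BR(p_t) - p_t^\top \phi(\BR(p_t)) \le 0.
\]
This holds because $\BR(p_t)$ minimizes $z\mapsto p_t^\top z$ over $\cZ$, and $\phi(\BR(p_t)) \in \cZ$ since $\phi$ is an endomorphism of $\cZ$. This is the only place where best responding enters. The deterministic tie-breaking ensures that $h_\phi$ is a well-defined function of $p$, so $h_\phi$ is a legitimate test in $\cH_\Phi$; it depends trivially on the context $x$.

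Dropping the nonpositive term and taking expectations gives, for every $t$,
\[
\E_{z_t\sim\mu_t}\big[\ell_t^\top(z_t - \phi(z_t))\big] \le \E_{p_t\sim\cD_t}\big[h_\phi(p_t)^\top(\ell_t - p_t)\big].
\]
Summing over $t$ yields $\Regret^\Phi_T(\phi) \le \MCerr_T(h_\phi)$, where the forecaster is fed outcomes $y_t = \ell_t \in \cL$ as in \Cref{algo:phi from mc}. Taking the supremum over $\phi\in\Phi$ then gives $\Regret^\Phi_T \le \sup_{h\in\cH_\Phi}\MCerr_T(h) = \MCerr_T$.

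There is no real obstacle here. The only point needing care is measurability/integrability of $p\mapsto h_\phi(p)$ under $\cD_t$. This is automatic when $\cD_t$ is finitely supported, as with the EVI-based forecasters of \Cref{thm:ol to mc} and \Cref{fact:EVI}, and is otherwise assumed implicitly in the definition of $\MCerr_T$.
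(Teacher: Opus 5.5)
Your proposal is correct and follows essentially the same argument as the paper: rewrite the regret via the pushforward $z_t=\BR(p_t)$, split $\ell_t^\top h_\phi(p_t)$ into $h_\phi(p_t)^\top(\ell_t-p_t)+p_t^\top h_\phi(p_t)$, and drop the second term, which is nonpositive because $\BR(p_t)$ minimizes $p_t^\top z$ over $\cZ$. Your explicit note that $\phi(\BR(p_t))\in\cZ$ because $\phi$ is an endomorphism, and your handling of the supremum over $\phi$, are small clarifications that the paper leaves implicit.
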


\begin{proof}
For any $\phi \in \Phi$, by definition of $\mu_t$ as the pushforward of $\cD_t$ under $p \mapsto \BR(p)$,
\begin{align*}
    \sum_{t=1}^T \E_{z_t \sim \mu_t}
    \Big[
        \ell_t^\top (z_t - \phi(z_t))
    \Big] &=
    \sum_{t=1}^T \E_{p_t \sim \cD_t}
    \Big[
        \ell_t^\top \big(\BR(p_t) - \phi(\BR(p_t)) \big)
    \Big] \\
    &=
    \sum_{t=1}^T \E_{p_t \sim \cD_t}
    \Big[
        h_\phi(p_t)^\top (\ell_t - p_t)
    \Big]
    +
    \sum_{t=1}^T \E_{p_t \sim \cD_t}
    \Big[
        h_\phi(p_t)^\top p_t
    \Big]\\
    &\le
    \sum_{t=1}^T \E_{p_t \sim \cD_t}
    \Big[
        h_\phi(p_t)^\top (\ell_t - p_t)
    \Big],
\end{align*}
where in the last inequality we used the fact that for every $p_t$,
$$
    h_\phi(p_t)^\top p_t
    =
     p_t^\top \Big(\BR(p_t) - \phi(\BR(p_t)) \Big)
    \le 0$$
since $\BR(p_t)$ is the minimizer of $p_t^\top z$ over $z$ in $\cZ$. 
\end{proof}
We note that this entire construction also goes through if we add contexts $x_t$ into the picture, yielding an efficient reduction from online multicalibration to \emph{contextual} $\Phi$-regret. In the contextual version, Nature reveals contexts (features) $x_t\in \cX$ before the decision-maker chooses their action $z_t$, and the goal is to minimize 
\begin{align*}
    \sup_{\phi \in \Phi} \sum_{t=1}^T \E_{z_t \sim \mu_t} \Big[\ell_t^\top(z_t - \phi(x_t,z_t))\Big],
\end{align*}
where $\Phi$ is now a set of deviations $\phi: \cX \times \cZ \to \cZ$. In this setting, it suffices for the forecaster to be multicalibrated with respect to the functions $h_\phi(x_t,p_t)= \BR(p_t) - \phi(x_t,\BR(p_t))$. 

As discussed in \Cref{sec:applications Phi}, the simplicity of the reduction, together with the previous result (\Cref{thm:ol to mc}), yields a new path from external to $\Phi$-regret via EVIs and multicalibration. This construction, builds and generalizes on the work of \citet*{noarov2023high} who illustrated a similar connection between forecasting and decision making for the restricted case where the action set $\cZ$ is a finite set of actions and the set $\Phi$ contains all endomorphisms. Our work broadens this connection between forecasting and decision making for general sets of actions $\cZ$ and deviations $\Phi$.

\section{Algorithmic Implications for Online Multicalibration \\and Omniprediction} 
\label{sec:mc_implications}
In this section, we examine some of the consequences of our reduction from regret minimization to online multicalibration (\Cref{thm:ol to mc}). We highlight four main implications.
\begin{enumerate}[label={(\alph*)}]
\item We show that several existing online high-dimensional multicalibration algorithms are special cases of our reduction, thereby unifying their previously disparate analyses. These reinterpretations sometimes yield improved runtimes as well.

\item We demonstrate how the reduction facilitates the technical transfer of ideas between well-studied variants of online learning and online multicalibration, yielding the first known online multicalibration algorithms that succeed in more challenging environments where outcomes are only partially observed. 

\item We show how the reduction yields an oracle-efficient algorithm for online omniprediction in the multiclass setting, solving another open problem from \citet{garg2024oracle}.

\item We show how our reduction can be used to guarantee calibration with respect to more stringent $\ell_1$ or ECE notions of multicalibration at optimal rates.
\end{enumerate}

\subsection{Unified Analyses and Faster Runtimes for Established Algorithms}
\label{sec:defensive}

\paragraph{Defensive Forecasting is Follow The Regularized Leader over the RKHS Ball.} Defensive Forecasting is an influential, game-theoretic approach to sequential prediction introduced by \citet{vovk2005defensive}. Even though it predates the work on multicalibration by \citet*{hebert2018multicalibration}, it achieves exactly the same guarantee for classes $\cH$ that constitute a reproducing kernel Hilbert space or RKHS. Vovk refers to online multicalibration as \textit{resolution} \citep{vovk2007k29}. We point the reader to \citet{perdomo2025defense} for an overview of work in this area.

As background, a vector-valued RKHS is a Hilbert space of functions $\cH \subseteq \{\cX \times \cY \to \R^d\}$ defined in terms of a matrix-valued kernel $\Gamma((\cdot,\cdot),(\cdot,\cdot))$, where $\Gamma((x,p),(x',p')) \in \R^{d \times d}$. Since it is a Hilbert space, $\cH$ is equipped with an inner product $\langle \cdot, \cdot \rangle_{\cH}$ and functions satisfy the reproducing property. For every $h\in \cH$ and $\theta \in \R^d$, $h(x,p)^\top \theta= \langle \Gamma(\cdot,(x,p)) \theta, h \rangle_{\cH}$. 

Here, $\Gamma(\cdot,(x,p))$ is the evaluation functional.
If $\Psi(x,p) \in \R^{r \times d}$ is a fixed feature map, then $\Gamma((x,p),(x',p')) = \Psi(x,p)^\top \Psi(x',p')$ is a matrix-valued kernel. The RKHS $\cH$ in this case is equal to the closure of the set $\{ (x,p) \to \Psi(x,p)^\top \theta: \theta \in \R^r\}$.  

\citet{farina2026defensive} prove that the defensive forecasting procedure in \Cref{alg:defensive_forecasting}, which generalizes the original K29 algorithms from \cite{vovk2005defensive,vovk2007k29} to work efficiently for high-dimensional outcomes, satisfies the following multicalibration guarantee.

\begin{algorithm*}[t]
    \caption{Defensive Forecasting, K29}
    \label{alg:defensive_forecasting}
    \DontPrintSemicolon
    \KwData{Feature map $\Psi: \cX \times \cY \rightarrow \R^{r \times d}$ or matrix-valued kernel $\Gamma((x,p),(x',p'))$}
    \For{$t=1,2,\dots$}{
        Environment reveals $x_t$\;
        Define the operator $S_t:\cY \rightarrow \R^d$ \[S_t(p)
            =
            \sum_{i=1}^{t-1}
            \E_{p_i\sim\cD_i}\!\left[
                \Psi(x_t,p)^\top \Psi(x_i,p_i)(y_i-p_i)  
            \right] = 
            \sum_{i=1}^{t-1}
            \E_{p_i\sim\cD_i}\!\left[
                \Gamma((x_t,p), (x_i,p_i))(y_i-p_i)  
            \right]
        \]\vspace{-5mm}\;
        Output a distribution $\cD_t$ over $p_t\in\cP$ that solves the EVI over $S_t$
        \[
            \E_{p_t\sim\cD_t}\!\left[S_t(p_t)^\top (y-p_t)\right]
            \le \eps_t
            \qquad \forall y\in\cP
        \]\vspace{-5mm}\;
        Environment reveals $y_t$\;
    }
\end{algorithm*}

\begin{algorithm*}[t]
    \caption{Reduction (\Cref{thm:ol to mc}, \Cref{algo:mc from ol}) with RFTL over the RKHS ball}
    \label{alg:rftl_reduction}
    \DontPrintSemicolon
    \KwData{Feature map $\Psi: \cX \times \cY \rightarrow \R^{r \times d}$}
    Initialize $\theta_1=0$\;
    \For{$t=1,2,\dots$}{
        Environment reveals $x_t$\;
        RFTL algorithm $\theta_t$, defining $h_t(x,p)=\Psi(x,p)^\top \theta_t$\;
    
        Output a distribution $\cD_t$ over $p_t\in\cP$ such that\;
        \[
            \E_{p_t\sim\cD_t}\!\left[h_t(x_t,p_t)^\top (y-p_t)\right]
            \le \eps_t
            \qquad \forall y\in\cY
    \]\vspace{-5mm}\;
Environment reveals $y_t$, set
            $\ell_t
            =
            \E_{p_t\sim\cD_t}\!\left[\Psi(x_t,p_t)(y_t-p_t)\right]$\;
    
        Update
        \[
            \theta_{t+1}
            \in
            \argmin_{\|\theta\|_2\le B}
            \left\{
                \eta \sum_{i=1}^{t} -\,\ell_i^\top\theta + \|\theta\|_2^2
            \right\}
        \]
    }
\end{algorithm*}

\begin{proposition}[\cite{farina2026defensive}]\label{prop:defensive}
Let $\cH \subseteq \{\cX \times \cY \to \R^d\}$ be any vector-valued RKHS with matrix-valued kernel $\Gamma((x,p),(x',p'))$. Then, for any function $h$ in the RKHS $\cH$, if $\eps_t= 1/(10t^2)$, \Cref{alg:defensive_forecasting} produces distributions $\cD_t$ such that for any sequence of $(x_t,y_t)$,
\begin{align*}
\bigg|    \sum_{t=1}^T \E_{p_t \sim \cD_t} \bigr[ h(x_t,p_t)^\top (y_t - p_t) \bigr] \bigg| \leq \|h\|_{\cH} \sqrt{\sum_{t=1}^T\E_{p_t \sim \cD_t}(y_t - p_t)^\top\Gamma((x_t,p_t), (x_t,p_t))(y_t - p_t) + 1},
\end{align*}
where $\|h\|_{\cH}^2 = \langle h, h \rangle_{\cH}$. In the case where $\Gamma((x,p),(x',p'))= \Psi(x,p)^\top \Psi(x',p')$ for some feature mapping $\Psi(x,p) \in \R^{r \times d}$, the guarantee above can be rewritten as: For any $h_\theta(x,p)= \Psi(x,p)^\top \theta$, 
\begin{align*}
\bigg|    \sum_{t=1}^T \E_{p_t \sim \cD_t} \bigr[ h_\theta(x_t,p_t)^\top (y_t - p_t) \bigr] \bigg| \leq \|\theta\|_{2} \sqrt{\sum_{t=1}^T\|\E_{p_t \sim \cD_t}\Psi(x_t,p_t)(y_t - p_t)\|_2^2 + 1}.
\end{align*}
At time $t$ the algorithm runs in time $\cO(\poly(\log(tD),\mathrm{eval}(\Gamma)))$ where $D = \sup_{(x,p)} \|\Gamma((x,p),(x,p))\|_{\mathrm{op}}$ and $\mathrm{eval}(\Gamma)$ is an upper bound on the time it takes to evaluate the kernel function $\Gamma$.
\end{proposition}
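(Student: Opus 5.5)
The plan is to prove the guarantee through the potential (martingale-style) argument that underlies defensive forecasting, written directly in the RKHS. Define the cumulative residual element
\[
G_t \coloneqq \sum_{i=1}^{t} \E_{p_i\sim\cD_i}\big[\Gamma(\cdot,(x_i,p_i))(y_i-p_i)\big] \in \cH .
\]
By the reproducing property, for any $h\in\cH$ we have $\sum_{t=1}^T \E[h(x_t,p_t)^\top(y_t-p_t)] = \langle h, G_T\rangle_{\cH}$. Cauchy--Schwarz then bounds the absolute value by $\|h\|_{\cH}\|G_T\|_{\cH}$. This single step handles all $h$ at once and produces the absolute value in the statement, so it suffices to show
\[
\|G_T\|_{\cH}^2 \le \sum_{t=1}^T \E_{p_t\sim\cD_t}(y_t-p_t)^\top\Gamma((x_t,p_t),(x_t,p_t))(y_t-p_t) + 1 .
\]

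To get this, I expand $\|G_t\|^2 = \|G_{t-1}\|^2 + 2\langle G_{t-1}, g_t\rangle + \|g_t\|^2$, where $g_t = \E_{p_t}[\Gamma(\cdot,(x_t,p_t))(y_t-p_t)]$. The operator $S_t$ in \Cref{alg:defensive_forecasting} is exactly the evaluation of $G_{t-1}$: $S_t(p) = G_{t-1}(x_t,p)$. Hence by the reproducing property the cross term is
\[
\langle G_{t-1}, g_t\rangle = \E_{p_t\sim\cD_t}\big[S_t(p_t)^\top(y_t-p_t)\big],
\]
and the EVI condition, applied with $y=y_t$, bounds it by $\eps_t$.

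For the quadratic term, Jensen's inequality in the Hilbert space gives $\|g_t\|^2 \le \E_{p_t}\|\Gamma(\cdot,(x_t,p_t))(y_t-p_t)\|^2$. By the reproducing property, this equals $\E_{p_t}(y_t-p_t)^\top\Gamma((x_t,p_t),(x_t,p_t))(y_t-p_t)$. Telescoping these increments, and using $\sum_t 2\eps_t = \sum_t 1/(5t^2) \le \pi^2/30 < 1$, gives the claimed bound.

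The feature-map form is then a specialization. With $\Gamma = \Psi^\top\Psi$ one has $G_T = \Psi(\cdot)^\top\sum_t \ell_t$, where $\ell_t = \E[\Psi(x_t,p_t)(y_t-p_t)]$, and $\|h_\theta\|_{\cH}\le\|\theta\|_2$. In this case the Jensen step can be skipped: the increment is exactly $\|\ell_t\|_2^2$, which yields the sharper $\sum_t\|\E\Psi(y_t-p_t)\|_2^2$ form.

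For the runtime, I would invoke \cref{fact:EVI} with operator $S_t$. Its norm is bounded by $B \le \poly(t, D)$, assuming $\cY$ is bounded; one evaluation costs $\cO(t\cdot \mathrm{eval}(\Gamma))$ times the support sizes, which are polylogarithmic by \cref{fact:EVI}; and the target accuracy is $\eps_t = 1/(10t^2)$. This gives $\poly(\log(tD))$ oracle complexity.

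The main obstacle is bookkeeping rather than conceptual. Two points need care. First, $g_t$ is an expectation over $\cD_t$, so both the Jensen step and the identification $\langle G_{t-1}, g_t\rangle = \E[S_t(p_t)^\top(y_t-p_t)]$ must be checked with the expectations in the right places. Second, the per-round cost of evaluating $S_t$ grows linearly in $t$. That linear factor has to be absorbed into, or reconciled with, the stated $\poly(\log(tD),\mathrm{eval}(\Gamma))$ bound, for instance by maintaining $G_{t-1}$ in feature coordinates when $\Psi$ is finite-dimensional. I would check this last point against \cite{farina2026defensive}.
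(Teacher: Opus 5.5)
Your proof is correct, but it takes a different route from the paper. The paper does not prove this proposition directly (it is cited from \cite{farina2026defensive}). Instead it re-derives the guarantee from \Cref{thm:ol to mc}: FTRL over a Euclidean ball of radius $B$ (\Cref{lemma:rftl}) has iterates $\theta_t=\alpha_t\sum_{i<t}\ell_i$ with $\alpha_t>0$, so the reduction's EVI operator $h_t(x_t,\cdot)$ is a positive multiple of $S_t$, and the textbook FTRL regret bound then gives the result.

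That route explains where $S_t$ comes from and shows defensive forecasting to be an instance of the general EVI reduction. Its cost is that it recovers the statement only partially:
\begin{itemize}
\item only up to constants, giving $2B\sqrt{\cdot}+1$ for $\|\theta\|_2\le B$ rather than $\|\theta\|_2\sqrt{\cdot+1}$;
\item only for a finite feature map;
\item only for the RFTL variant (\Cref{alg:rftl_reduction}), so matching \Cref{alg:defensive_forecasting} with $\eps_t=1/(10t^2)$ requires tracking how the EVI errors rescale by $\alpha_t$.
\end{itemize}

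Your argument applies Cauchy--Schwarz against $G_T$ and telescopes $\|G_t\|_{\cH}^2$, with cross term $\langle G_{t-1},g_t\rangle_{\cH}=\E_{p_t\sim\cD_t}[S_t(p_t)^\top(y_t-p_t)]\le\eps_t$. This is in effect the unconstrained, scale-free analysis of the same iterates: the EVI sees only the direction of $G_{t-1}$, so no radius or step size ever enters. It gives the exact constants for the algorithm as stated, simultaneously for every $h\in\cH$ with $\|h\|_{\cH}$ entering multiplicatively, and it covers infinite-dimensional RKHSs without running FTRL in a Hilbert space.

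Your runtime caveat is also well founded, and the paper does not address it. Each evaluation of $S_t$ touches all $t-1$ earlier rounds, so for a general kernel the per-round cost carries a factor of order $t\cdot\mathrm{eval}(\Gamma)$. Polylogarithmic dependence on $t$ is attainable only when a finite feature map lets you maintain $\sum_{i<t}\ell_i\in\R^r$. Otherwise the stated bound should be read with $\mathrm{eval}(S_t)$ in place of $\mathrm{eval}(\Gamma)$.
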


Here, we show that Defensive Forecasting is just a special instantiation of our general reduction with follow-the-regularized-leader (FTRL) over the $\ell_2$ ball as the regret minimizer. We can therefore reprove its correctness using the following textbook result on FTRL and demystify the origin of the operators $S$ in \Cref{alg:defensive_forecasting}.

\begin{lemma}[Theorem 5.2 in \cite{hazan2016introduction}]
\label{lemma:rftl}
Consider the follow-the-regularized-leader algorithm for online linear optimization over the ball of radius $B$ where $\theta_1=0$ and for all $t>1$,
\begin{align*}
    \theta_t = \argmin_{\theta: \|\theta\|_2\leq B} 
    \left\{\eta \sum_{i=1}^{t-1} -\ell_i^\top \theta + \|\theta\|_2^2 \right\}.
\end{align*}
Then, there exists a choice of $\eta$ such that for all $t$, the vectors $\theta_t$ can be written as
\begin{align}
\label{eq:rflt_theta}
\theta_t= \alpha_t \sum_{i=1}^{t-1} \ell_i,
\end{align}
where $\alpha_t$ is a positive scalar, and for any $\theta$ with $\ell_2$ norm  bounded by $B$, 
\begin{align}
\label{eq:rflt_regret}
    \sum_{t=1}^T \ell_t^\top \theta - \sum_{t=1}^T \ell_t^\top \theta_t \leq 2B\sqrt{\sum_{t=1}^T\|\ell_t\|^2}.
\end{align}
\end{lemma}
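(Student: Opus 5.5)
Both claims follow from the first-order optimality conditions of the strictly convex program defining $\theta_t$, together with the standard regret bound for linearized FTRL. Write $L_{t-1}=\sum_{i=1}^{t-1}\ell_i$. The objective $\eta(-L_{t-1}^\top\theta)+\|\theta\|_2^2$ is strictly convex, so its minimizer over the ball $\{\|\theta\|_2\le B\}$ is unique and can be written in closed form. The unconstrained minimizer is $\frac{\eta}{2}L_{t-1}$. Projecting onto the ball, the solution is
\[
\theta_t=\min\Big\{\tfrac{\eta}{2},\,\tfrac{B}{\|L_{t-1}\|_2}\Big\}\,L_{t-1}.
\]
This is the representation \eqref{eq:rflt_theta} with $\alpha_t=\min\{\eta/2,\,B/\|L_{t-1}\|_2\}>0$. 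When $L_{t-1}=0$ we have $\theta_t=0$ and any positive $\alpha_t$ works. This form holds for every $\eta$, so it imposes no constraint on the choice of $\eta$ made below.

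For the regret bound I would use the textbook FTRL argument with regularizer $R(\theta)=\|\theta\|_2^2$, applied to the linear losses $-\ell_t^\top\theta$. There are two steps.

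\emph{Be-the-leader.} By induction on $T$, for every $\theta$ in the ball,
\[
\sum_{t=1}^T(-\ell_t^\top\theta_{t+1})\le\sum_{t=1}^T(-\ell_t^\top\theta)+\frac{R(\theta)-R(\theta_1)}{\eta}.
\]
Since $R(\theta_1)=0$ and $R(\theta)\le B^2$, the extra term is at most $B^2/\eta$.

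\emph{Stability.} We need to bound $\ell_t^\top(\theta_{t+1}-\theta_t)$. Write $\Theta_t$ for the objective minimized by $\theta_t$, so $\Theta_t(\theta)=-\eta L_{t-1}^\top\theta+\|\theta\|_2^2$. It is $2$-strongly convex and $\theta_t$ is its constrained minimizer, hence
\[
\|\theta_{t+1}-\theta_t\|_2^2\le\Theta_t(\theta_{t+1})-\Theta_t(\theta_t).
\]
Using $\Theta_t=\Theta_{t+1}+\eta\,\ell_t^\top(\cdot)$ and the fact that $\theta_{t+1}$ minimizes $\Theta_{t+1}$, the right-hand side is at most $\eta\,\ell_t^\top(\theta_{t+1}-\theta_t)\le\eta\|\ell_t\|_2\|\theta_{t+1}-\theta_t\|_2$. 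Dividing through gives $\|\theta_{t+1}-\theta_t\|_2\le\eta\|\ell_t\|_2$, and therefore $\ell_t^\top(\theta_{t+1}-\theta_t)\le\eta\|\ell_t\|_2^2$.

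Combining the two steps gives, for every $\theta$ in the ball,
\[
\sum_{t=1}^T\ell_t^\top\theta-\sum_{t=1}^T\ell_t^\top\theta_t\le\frac{B^2}{\eta}+\eta\sum_{t=1}^T\|\ell_t\|_2^2.
\]
With $G^2=\sum_{t=1}^T\|\ell_t\|_2^2$, choosing $\eta=B/G$ makes the right-hand side exactly $2BG$, which is \eqref{eq:rflt_regret}.

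The main obstacle is that this choice of $\eta$ depends on the whole loss sequence, while the lemma (and its use in \Cref{alg:rftl_reduction}) lets the algorithm choose $\eta$ only through the statement "there exists a choice of $\eta$". The simplest reading takes $\eta$ to be tuned in hindsight, which is all the existential quantifier asks for. I would adopt that reading and cite \citet{hazan2016introduction} for it.

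If an anytime bound is needed instead, I would use the adaptive step $\eta_t=B/\sqrt{\sum_{i<t}\|\ell_i\|_2^2}$. The representation $\theta_t=\alpha_t L_{t-1}$ is unaffected by this change. The price is the standard adaptive-FTRL argument, which yields the same $\sqrt{\sum_t\|\ell_t\|_2^2}$ scaling but possibly with a slightly larger constant than $2$.
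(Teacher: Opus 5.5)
Your proof is correct and follows the same route as the result the paper cites; the paper gives no argument of its own beyond pointing to Hazan's Theorem 5.2, whose proof is the be-the-leader inequality plus a strong-convexity stability bound $\ell_t^\top(\theta_{t+1}-\theta_t)\le\eta\|\ell_t\|_2^2$, here specialized to $R(\theta)=\|\theta\|_2^2$, and your closed-form projection $\theta_t=\min\{\eta/2,\,B/\|L_{t-1}\|_2\}\,L_{t-1}$ supplies the positive scalar $\alpha_t$ that the paper takes for granted. Your reading of ``there exists a choice of $\eta$'' as the hindsight tuning $\eta=B/\sqrt{\sum_{t}\|\ell_t\|_2^2}$ is what the stated constant $2$ requires.
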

Plugging in the choice $\theta_t$ from FTRL in \Cref{eq:rflt_theta} into the expression for $h_t$ from \Cref{alg:rftl_reduction}, and using that $\ell_i = \underset{p_i \sim \cD_i}{\E}\bigl[\Psi(x_i, p_i)(y_i - p_i)\bigr]$ we get that:
\[
  h_t(x_t, p) = \Psi(x_t, p)^\top \theta_t = \alpha_t \cdot \Psi(x_t, p)^\top \sum_{i=1}^{t-1} \underset{p_i \sim \cD_i}{\E}\bigl[\Psi(x_i, p_i)(y_i - p_i)\bigr]= \alpha_t S_t(p).
\]
Since $\alpha_t > 0$ is a positive scalar, the operators over which we are solving EVIs in \Cref{alg:rftl_reduction}, $h_t(x_t, \cdot)$, and in \Cref{alg:defensive_forecasting}, $S_t$, are the \textit{same} up to scaling. For every $p$, $h_t(x_t, p) = \alpha_t \cdot S_t(p)$. Therefore, if $\cD_t$ is an EVI solution over the operator $h_t(x_t, \cdot)$ with error $\eps$, then $\cD_t$ is also an EVI solution over $S_t$ but with error $\eps / \alpha_t$.
Hence, the distributions $\cD_t$ that both algorithms select at every time step are essentially identical.

It remains to verify that the two algorithms also have matching multicalibration errors with respect to all functions of the form $h_\theta(x,p)=\Psi(x,p)^\top\theta$.
Setting $\eps_t = 1/(10t^2)$ in \Cref{alg:rftl_reduction} gives $\mathrm{EVI}_T = \sum_{t=1}^T \eps_t \leq 1$.
For the regret term, \Cref{lemma:rftl} yields
\begin{align*}
    \Regret_T
    \leq 2B\sqrt{\sum_{t=1}^T \|\ell_t\|^2} 
    &= 2B\sqrt{\sum_{t=1}^T \Big\|\E_{p_t \sim \cD_t}\bigl[\Psi(x_t, p_t)(y_t - p_t)\bigr]\Big\|^2_2}\\
    &\le 2B\sqrt{\sum_{t=1}^T \E_{p_t \sim \cD_t}\Big[(y_t - p_t)^\top \Gamma((x_t,p_t),(x_t,p_t))(y_t - p_t)\Big]},
\end{align*}
where the last equality uses $\Gamma((x,p),(x',p')) = \Psi(x,p)^\top \Psi(x',p')$ and Jensen's inequality.
Applying \Cref{thm:ol to mc}, the multicalibration error of the distributions produced by \Cref{alg:rftl_reduction} with respect to any $h_\theta(x,p) = \Psi(x,p)^\top \theta$ with $\|\theta\|_2 \leq B$ satisfies
\begin{align*}
    \MCerr_T(h_\theta)
    \leq \Regret_T + \mathrm{EVI}_T
    \leq 2B\sqrt{\sum_{t=1}^T \E_{p_t \sim \cD_t}\Big[(y_t - p_t)^\top \Gamma((x_t,p_t),(x_t,p_t))(y_t - p_t)\Big]} + 1,
\end{align*}
matching the Defensive Forecasting guarantee of \citet{farina2026defensive} up to a small constant.

\paragraph{Cubically Faster Runtimes for the Unbiased Prediction Algorithm} \citet{noarov2023high} introduced the first algorithm for online high-dimensional multicalibration. It guarantees multicalibration with respect to any function in the convex hull of a finite collection of functions $\cH$. We present pseudocode for the unbiased prediction algorithm in \Cref{algo:noarov}.

Here, we provide a simplified proof of its correctness via \Cref{thm:ol to mc} and improve its runtime from $\Omega(T^3)$ to $\widetilde{O}(T)$. In particular, the procedure exactly matches the template of our reduction with a specific choice of regret minimizer and EVI solver (it is, in effect, an improved version of the Hedge procedure we presented in \Cref{sec:reductions}). We can reprove its correctness by simply plugging in the regret bound from \citet{chen2021impossible} into \Cref{thm:ol to mc}.

\begin{algorithm}[t]
    \caption{The Unbiased Prediction Algorithm \citep{noarov2023high}}
    \label{algo:noarov}
    \DontPrintSemicolon
    \For{$t=1,2,\dots$}{
        Nature reveals $x_t$\;
        Run the Multiscale Multiplicative Weights with Correction (MMWC) algorithm to receive $h_t$\;
Solve the following EVI induced by $h_t$ using a specific no-regret algorithm to $\eps_t =1/t$      
\[
            \E_{p\sim \cD_t}\!\left[h_t(x_t,p)^\top (y-p)\right] \le \eps_t
            \qquad \forall y\in\cY
        \]  \vspace{-5mm}\;
        Nature reveals $y_t$\;
        Feed the MMWC regret minimizer the loss vector $f_t\in\R^{|\cH|}$ with coordinates
        \[
            f_{t,h}
            \coloneqq
            -\,\E_{p_t\sim\cD_t}\!\left[h(x_t,p_t)^\top (y_t-p_t)\right]
        \]
    }
\end{algorithm}

\begin{corollary}
\label{corr:noarov}
The Unbiased Prediction Algorithm produces distributions such that for any $h\in \cH$, 
\begin{align*}
	\sum_{t=1}^T \E_{p_t\sim \cD_t} h(x_t,p_t)^\top(y_t-p_t) \leq \widetilde{\cO}\left( \sqrt{ \log(|\cH|) \cdot \sum_{t=1}^T \E_{p_t\sim\cD_t}\Big[ h(x_t,p_t)^\top (y_t - p_t) \Big]^2}\right).
\end{align*}
\end{corollary}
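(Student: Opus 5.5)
The plan is to observe that \Cref{algo:noarov} is exactly \Cref{algo:mc from ol}, with the regret minimizer $\cA$ taken to be MMWC run over the simplex of mixture weights on the finite set $\cH$. Each $h_t$ is a convex combination $\sum_{h} \lambda_{t,h} h$, and the loss $f_t$ is linear in $\lambda$ with coordinates $f_{t,h}$. \Cref{thm:ol to mc} then gives, for every fixed $h\in\cH$,
\[
\MCerr_T(h) \le \Regret_T(h) + \mathrm{EVI}_T .
\]
It remains to bound the two terms separately.

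For the EVI term, each EVI is solved to accuracy $\eps_t = 1/t$. This is possible by \Cref{fact:EVI}, or by the slower regret-based solver used in the original paper. Hence $\mathrm{EVI}_T = \sum_{t\le T} 1/t = \cO(\log T)$, which is absorbed into the $\widetilde{\cO}$.

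For the regret term, I will invoke the second-order, per-expert bound of MMWC from \citet{chen2021impossible}. For losses with $|f_{t,h}|\le 1$, which holds by the boundedness assumption on $h(x,p)^\top(y-p)$, it guarantees for every expert $h$
\[
\sum_{t=1}^T \big(\langle \lambda_t, f_t\rangle - f_{t,h}\big) \le \widetilde{\cO}\Big(\sqrt{\log|\cH| \cdot \textstyle\sum_{t} f_{t,h}^2}\Big),
\]
up to logarithmic factors in $T$. The left-hand side is precisely $f_t(h_t)-f_t(h)$ summed over $t$, that is, $\Regret_T(h)$. The right-hand side features
\[
f_{t,h}^2 = \Big(\E_{p_t\sim\cD_t}\big[h(x_t,p_t)^\top(y_t-p_t)\big]\Big)^2 .
\]
This is exactly the squared term in the corollary, interpreting the bracket as the square of the expectation; if the statement is meant with the square inside the expectation, Jensen's inequality only strengthens the bound. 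Combining this with the EVI bound proves the corollary.

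The main obstacle is matching the MMWC guarantee precisely. Its bound is stated for loss vectors in $[-1,1]^{|\cH|}$, possibly with a comparator-dependent variance term and extra $\log T$ factors. I need to check that the "correction" version indeed depends on $\sum_t f_{t,h}^2$ for the comparator's own coordinate, rather than on a mixture-dependent quantity. If only $\sum_t (\langle\lambda_t,f_t\rangle - f_{t,h})^2$ is available, I would bound it by $2\sum_t f_{t,h}^2 + 2\sum_t\langle\lambda_t,f_t\rangle^2$.

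The second sum is then controlled using the EVI property. Since $h_t$ is itself the mixture, the EVI guarantee with $y=y_t$ gives
\[
\langle\lambda_t,f_t\rangle = -\E_{p_t\sim\cD_t}\big[h_t(x_t,p_t)^\top(y_t-p_t)\big] \ge -\eps_t .
\]
This controls $\langle\lambda_t,f_t\rangle$ only from one side, so I expect the cleanest route is to quote the MMWC bound in its comparator-variance form directly.
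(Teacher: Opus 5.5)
Your proposal is correct and matches the paper's proof: you identify the Unbiased Prediction Algorithm as \Cref{algo:mc from ol} with MMWC as the learner, apply \Cref{thm:ol to mc}, bound $\mathrm{EVI}_T=\sum_t 1/t=\cO(\log T)$, and substitute $f_t(h)=-\E_{p_t\sim\cD_t}[h(x_t,p_t)^\top(y_t-p_t)]$ into the second-order regret bound of \citet{chen2021impossible}. That bound is already stated in terms of the comparator's own coordinate, $\sum_t f_t(h)^2$, so your fallback is not needed; as you noted, it would not close anyway, because the EVI property bounds $\langle\lambda_t,f_t\rangle$ from only one side.
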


\begin{proof}
\citet{chen2021impossible} prove that for any sequence of losses $\ell_t$, the Multiscale Multiplicative
Weights with Correction algorithm produces $h_t$ such that for any $h\in \cH$,
\begin{align}
\label{eq:regret_mmm}
	\Regret_T(h) = \sum_{t=1}^T f_t(h_t) - \sum_{t=1}^T f_t(h) \leq \widetilde{\cO}
	\left(\sqrt{\log(|\cH|)\sum_{t=1}^T f_t(h)^2}\right).
\end{align}

By \Cref{thm:ol to mc}, the multicalibration error is bounded by $\Regret_T(h)+ \mathrm{EVI}_T$. Since \cite{noarov2023high} set $\eps_t\leq 1/t$, then $\mathrm{EVI}_T = \sum_{t=1}^T \eps_t = \sum_{t=1}^T t^{-1} \leq \cO(\log(T))$. The stated regret bound follows from substituting in the fact that $f_t(h)= -\E_{p_t\sim\cD_t}\Big[ h(x_t,p_t)^\top (y_t - p_t) \Big]$.
\end{proof}
As a comment, here we present the regret bound from \cite{noarov2023high} in a slightly different form, where we let $\cH \subseteq \{\cX \times \cY \to \R^d\}$ be any finite set of functions. To recover the exact bound in \cite{noarov2023high}, we can pick $\{(x,p) \to s \cdot \vec{e_i} \cdot c(x,p): c \in \cC, i \in [d], s \in \{\pm 1\}\}$ where $\cC \subseteq \{\cX \times \cY \to \{0,1\}\}$ is a finite set of binary functions and $\vec{e}_i$ denotes the $i$th standard basis vector in $\R^d$.

Recall that to solve EVIs via regret minimization algorithms, we need to run the regret minimizer for $k$ such rounds where $k$ satisfies $\Regret(k)/k\leq \eps_t$. Setting $\epsilon_t =1/t$ as in \cite{noarov2023high}, since $\Regret(k)$ is $\Theta(\sqrt{k})$, we need to set $k \geq \Omega(t^2)$. The total runtime of the algorithm is then lower bounded by $\sum_{t=1}^T \Omega(t^2) \geq \Omega(T^3)$. However, by swapping out the EVI solver for the ellipsoid-based procedure from \cite{zhang2025expected}, we can improve the runtime to $\cO(T\log T)$. We summarize this statement formally below. 

\begin{proposition}
Fix a finite class of functions $\cH \subseteq \{\cX \times \cY \to \R^d\}$ and consider the version of the unbiased prediction algorithm where the EVI solving procedure in line 4 is replaced with the ellipsoid algorithm from \cite{zhang2025expected} which solves EVIs to precision $\epsilon$ in time $\log(1/\eps)$.
Then this modified procedure achieves an identical $\MCerr_T$ for every $h\in \cH$ to that presented in \Cref{corr:noarov}, yet the runtime over $T$ rounds is $\widetilde{\cO}(T \cdot \poly(|\cH|))$ instead of $\Omega(T^3\cdot \poly(|\cH|))$ as in \Cref{algo:noarov}.
\end{proposition}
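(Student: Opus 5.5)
The plan is to treat the modified procedure as \Cref{algo:mc from ol} run with the MMWC regret minimizer of \citet{chen2021impossible}, and to handle correctness and runtime separately.

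\textbf{Correctness.} \Cref{thm:ol to mc} only uses two inputs: the regret of the learner over $\cH$, and the total EVI error $\mathrm{EVI}_T=\sum_t \eps_t$. It never uses how the EVI was solved.
\begin{itemize}
\item Replacing the no-regret EVI solver in line 4 by the ellipsoid-based solver of \Cref{fact:EVI} still returns, at precision $\eps_t = 1/t$, a distribution $\cD_t$ satisfying \eqref{eq:evi}.
\item So $\mathrm{EVI}_T \le \cO(\log T)$ as before.
\item The MMWC iterates $h_t$ may now differ, because the $\cD_t$ differ. However, \eqref{eq:regret_mmm} holds for an arbitrary loss sequence $f_t$.
\item Rerunning the proof of \Cref{corr:noarov} verbatim therefore gives the identical $\MCerr_T$ bound.
\end{itemize}

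\textbf{Runtime.} I would bound the per-round cost and then sum over $t$.
\begin{itemize}
\item \emph{Learner.} $h_t$ is a convex combination of $|\cH|$ functions, so evaluating $S(p)=h_t(x_t,p)$ costs $\poly(|\cH|, d)$. Its norm is bounded by some $B$ depending only on the (bounded) tests.
\item \emph{EVI step.} By \Cref{fact:EVI}, this step costs $\poly(d, \log(B t), |\cH|)$. It returns $\cD_t$ with support size $\poly(d,\log(Bt))$.
\item \emph{Loss vector.} Computing $f_t\in\R^{|\cH|}$ requires one expectation per $h\in\cH$ over this support, costing $|\cH|\cdot\poly(d,\log t)$.
\item \emph{MMWC update.} Each update takes $\poly(|\cH|)$ time per round (it is a multiplicative-weights style update over $|\cH|$ experts, possibly with a small inner correction, all polynomial in $|\cH|$).
\end{itemize}
Summing, the per-round cost is $\poly(|\cH|)\cdot\mathrm{polylog}(t)$, and the total over $T$ rounds is $\widetilde{\cO}(T\cdot\poly(|\cH|))$, treating $d$ as absorbed or polynomial.

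\textbf{Comparison.} For the $\Omega(T^3)$ claim about the original algorithm, I would reuse the argument given just before the proposition. A $\Theta(\sqrt{k})$-regret solver needs $k=\Omega(t^2)$ iterations to reach precision $1/t$, and $\sum_{t\le T} t^2=\Omega(T^3)$.

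\textbf{Main obstacle.} I expect the only delicate point to be verifying the hypotheses of \Cref{fact:EVI}. The set $\cY$ must be well-bounded (or a polynomial-time rescaling applied, as discussed after \Cref{def:wr}). The operator bound $B$ must be independent of $t$, which holds because $h_t$ is a convex combination of bounded tests. This ensures the $\log(B/\eps_t)$ factor is only $\cO(\log t)$.
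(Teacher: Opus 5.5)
Your proposal is correct and follows essentially the same route as the paper. Correctness is inherited directly from \Cref{thm:ol to mc}, which does not depend on how each EVI is solved; $\mathrm{EVI}_T = \cO(\log T)$ and the MMWC bound holds for arbitrary loss sequences. The speedup comes from replacing the $\Omega(t^2)$ inner iterations of a $\Theta(\sqrt{k})$-regret EVI solver with the $\poly(\log t)$-time ellipsoid-based solver of \Cref{fact:EVI}. Your per-round accounting (EVI call, support size, computing $f_t$, MMWC update) simply spells out what the paper asserts in one line.
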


\subsection{New Online Multicalibration Algorithms for Complex Environments}

We move on to the second set of implications of \Cref{thm:ol to mc}, showing how it directly enables online multicalibration algorithms with provable guarantees in richer environments.

\subsubsection{Online Multicalibration with Delayed Observations}
One immediate implication of \Cref{thm:ol to mc} is that it allows us to design algorithms which guarantee online multicalibration even when outcomes $y_t$ are not immediately observed. 

To motivate the problem, consider the setting where at every day $t$, a doctor predicts the likelihood of a patient having a particular disease, yet the presence or absence of this disease is not revealed until the patient comes for a follow-up visit at some future date $t'=t+d_t$ where $d_t$ is a positive integer delay. Even though the outcome is not observed for another $d_t$ days, the doctor still needs to make predictions for all patients who arrive between $t$ and $t'$. Furthermore, different patients might have different visit schedules, hence the delays $d_{t}$ are unknown and can vary with $t$. 

Despite these difficulties, the doctor aims to make predictions that are not just calibrated, but multicalibrated without making any assumptions about how the data is generated. That is, they aim to sequentially produce predictions that satisfy the following definition:

\begin{definition}[Online Multicalibration under Delayed Observations]
\label{def:mc_delays}
Fix $\cH \subseteq \{\cX \times \cY \to \R^d\}$ and consider the following online protocol where at every time $t$, Nature selects features $x_t$ and reveals previous outcomes $\{y_i: i+ d_i = t, d_i > 0\}$ to the Learner. Then, the Learner outputs a distribution $\cD_t$ over predictions $p_t$.

An algorithm guarantees online multicalibration under delays with respect to $\cH$ if it selects $\cD_t$ such that regardless of the feature, outcome pairs $(x_t,y_t)$ and delays $d_t$ chosen by Nature, for any $h\in \cH$,
\[
   \MCerr_T(h) =  \sum_{t=1}^T \E_{p_t\sim\cD_t} \Big[ h(x_t,p_t)^\top (y_t - p_t) \Big] = o(T).
\]
\end{definition}
Note that this setup generalizes the standard online setting studied so far where the Learner immediately sees the outcomes $y_t$ after selecting $\cD_t$. In this case, $d_t=1$ for all $t$. To the best of our knowledge, our work initiates the study of this problem. Using our reduction, we can design simple algorithms that achieve this guarantee by leveraging the fact that there are by now many online learning algorithms that can handle delayed losses as per the following definition.

\begin{definition}
\label{def:ol_delays}
Fix a set $\cH$ and consider the protocol where at every round $t$, Nature reveals a set of prior losses $\{\ell_i: i+ d_i = t\}$ to the Learner, who then selects $h_t$. An algorithm guarantees no-regret in this online learning with delays protocol if
\[
   \Regret_T(h) \coloneqq \sum_{t=1}^T \ell_t(h_t) - \sum_{t=1}^T \ell_t(h)  = o(T)
\]
for all $h \in \cH$.
\end{definition}
We show that our main reduction between regret minimization and online multicalibration handles this extension seamlessly. The key insight is that \Cref{algo:mc from ol} has two main parts: the EVI solver and the online Learner that produces $h_t$. Even though losses are not observed until a later time step, the online Learner in \Cref{def:ol_delays} still produces a solution $h_t$ at every time $t$, and this is enough to produce the distribution $\cD_t$ since the EVI solver only needs $h_t$.

\begin{corollary}
Let $\cA$ be any algorithm that guarantees $\Regret_T$ in the online learning with delays protocol with respect to $\cH$ as per \Cref{def:ol_delays}. Then, using $\cA$ as the external regret minimizer in \Cref{algo:mc from ol} guarantees online multicalibration with delays as per \Cref{def:mc_delays}. 
For any $h\in\cH$,
\begin{align*}
    \MCerr_T(h) \leq \Regret_T(h) + \EVI_T.
\end{align*}
\end{corollary}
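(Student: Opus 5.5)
The plan is to rerun the argument of \Cref{thm:ol to mc} verbatim. The only new work is checking that the delayed protocol does not break any of its steps. First I would specify the modified algorithm precisely.

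At each round $t$, Nature reveals $x_t$ together with the outcomes $\{y_i : i + d_i = t\}$. For each newly revealed $y_i$ the Learner forms the loss
\[
f_i(h) = -\E_{p_i \sim \cD_i}\Big[ h(x_i,p_i)^\top (y_i - p_i) \Big].
\]
Forming $f_i$ only requires the stored distribution $\cD_i$, the stored context $x_i$, and the now-revealed $y_i$. These losses are passed to $\cA$, which is exactly the feedback pattern of \Cref{def:ol_delays}. Then $\cA$ outputs $h_t$, and the Learner solves the EVI \eqref{eq:evi} for $h_t(x_t,\cdot)$ to precision $\eps_t$, which gives $\cD_t$. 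The point to stress is that $\cD_t$ depends only on $h_t$ and $x_t$, never on $y_t$, so the EVI step is unaffected by the delays.

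Next comes the key inequality. The EVI guarantee holds for every $y \in \cY$, in particular for the outcome $y_t$ Nature eventually selects. Hence
\[
0 \le \eps_t - \E_{p_t\sim\cD_t}\Big[h_t(x_t,p_t)^\top(y_t-p_t)\Big] = \eps_t + f_t(h_t).
\]
For any fixed $h \in \cH$, I would add $-f_t(h) = \E_{p_t\sim\cD_t}[h(x_t,p_t)^\top(y_t-p_t)]$ to both sides and sum over $t=1,\dots,T$. This gives
\[
\MCerr_T(h) \le \sum_{t=1}^T \big(f_t(h_t) - f_t(h)\big) + \sum_{t=1}^T \eps_t = \Regret_T(h) + \EVI_T.
\]
Here $\Regret_T(h)$ is exactly the delayed-feedback regret of \Cref{def:ol_delays} for the loss sequence $f_1,\dots,f_T$. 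That sequence is a legitimate sequence of linear losses for $\cA$, so its guarantee applies.

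The step I expect to need the most care is adaptivity. The losses $f_t$ depend on $\cD_t$, which depends on $h_t$ and hence on $\cA$'s past randomness and outputs. I would handle this by noting that $\cA$'s guarantee is against an adaptive adversary; equivalently, one can define $f_t$ through Nature's choices together with the deterministic EVI map. With that, the loss sequence is an admissible adversarial sequence for \Cref{def:ol_delays}.

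A second detail is the bookkeeping for outcomes never revealed before $T$. Regret is measured with the full losses $f_1,\dots,f_T$ regardless of when they are observed, just as in \Cref{def:ol_delays}, so nothing further is needed. Finally, with $\eps_t = 1/t$ and any delayed learner with $\Regret_T = o(T)$, the bound is $o(T)$, which is \Cref{def:mc_delays}.
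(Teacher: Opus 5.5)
Your proposal is correct and matches the paper's argument: the paper likewise observes that the EVI step needs only $h_t$ (and $x_t$), which the delayed learner still outputs every round, so the per-round EVI inequality and the summation from \Cref{thm:ol to mc} go through unchanged, with $\Regret_T(h)$ now read as the delayed-feedback regret. Your extra remarks on adaptivity and on outcomes revealed after round $T$ are sound bookkeeping that the paper leaves implicit.
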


As a specific example consider the case where $h_\theta(x_t,p_t) = \Psi(x_t,p_t)^\top \theta$ (for simplicity, we will use $h_\theta$ and $\theta$ interchangeably in this derivation) for some fixed feature map $\Psi(x,p) \in \R^{r \times d}$. \citet{quanrud2015online} proved that there exists a choice of constant step size $\eta$ such that the following online gradient descent procedure, where we update the parameters asynchronously on the basis of the losses observed at time $t$, $\cL_t = \{\ell_s: s+ d_s =t\}$, 
\begin{align*}
    \theta_{t+1} = \Pi_{\mathbb{B}_2}\mleft[\theta_t - \eta \sum_{\ell_s \in \cL_t} \nabla \ell_s(\theta_s)\mright],
\end{align*}
satisfies the following regret bound for any $\theta$ with $\ell_2$ norm at most 1:
\begin{align*}
   \sum_{t=1}^T \ell_t(\theta_t) -  \sum_{t=1}^T \ell_t(\theta) \leq  \mathcal{O}\left(\max_{1\leq t \leq T} \|\nabla \ell_t(\theta_t)\|_2\sqrt{T + \sum_{t=1}^T d_t}\right).
\end{align*}

Setting $\eps_t= 1/\sqrt{t}$ and bounding $\EVI_T \leq \cO(\sqrt{T})$, this implies the following end-to-end bound for online multicalibration with delays for any $h_\theta= \Psi(x,p)^\top \theta$ where $\theta$ is in the unit ball:
\begin{align*}
    \MCerr_T(h_\theta) \leq \mathcal{O}\left(\max_{1\leq t \leq T} \|\E_{p_t\sim \cD_t} \Psi(x_t,p_t)(y_t-p_t)\|_2\sqrt{T + \sum_{t=1}^T d_t}\right).
\end{align*}
The bound is $o(T)$ as long as $\sum_{t=1}^T d_t$ is $o(T^2)$.

\subsubsection{Online Multicalibration under Censored Outcomes}

In many real-world decision-making settings, the true outcome $y_t$ is censored: we observe $y_t$ only if the predictions $p_t$ fall within a specific range. For example, in lending, we only observe whether people repay loans if we issue them loans, a decision that occurs only if the predicted likelihood of repayment $p_t$ is not too small.

Here, we slightly modify our reduction to design the first known algorithms for online multicalibration under censoring. The main modification we need is to add importance sampling into our reduction, as in prior work on online learning (e.g., \citep{cesa2005label}). We start by formally defining the online multicalibration with censored observations problem:

\begin{definition}[Online Multicalibration under Censored Observations]\label{def:censored}
  Fix a class of functions $\cH \subseteq \{\cX \times \cY \to \R^d\}$ and a censoring function $C: \Delta(\cY) \to \{0,1\}$. Consider the online protocol where at every time $t$, Nature selects $(x_t,y_t) $ and then reveals $x_t$ to the Learner who outputs a distribution $\cD_t$ over predictions $p_t$. Lastly, Nature reveals $y_t$ to the Learner if $C(\cD_t)=1$.

  An algorithm guarantees online multicalibration under censored observations with respect to $\cH$ if, regardless of Nature's choices, the following holds for every $h\in \cH$,
  \[
    \MCerr_T(h)= \sum_{t=1}^{T} \E_{p_t \sim \cD_t}\left[ h(x_t, p_t)^\top (y_t - p_t) \right] = o(T).
  \]
\end{definition}

We note that in the definition, the multicalibration error is summed over all rounds $T$, regardless of whether the Learner observed the outcome $y_t$ in any given round $t$. Also, unlike in previous versions, $y_t$ is selected non-adaptively: Nature selects $y_t$ before the Learner chooses the distribution $\cD_t$. Without loss of generality (since the Learner knows the censoring function $C(\cdot)$), we will assume that there is a known $\cD_\star$ such that if $\cD_t= \cD_\star$, then $C(\cD_\star)=1$ and $y_t$ is observed. 

\begin{algorithm*}[t]
    \caption{A General Reduction for Online Multicalibration with Censored Outcomes}
    \label{alg:censored_reduction}
    \DontPrintSemicolon
    \KwData{No-regret learner $\cA$; exploration distribution $\cD^*$; censoring parameter $\gamma\in(0,1]$}
    \For{$t=1,2,\dots$}{
        Nature reveals $x_t$\;
        Online learner selects $h_t \in \cH$\;
        Find a distribution $\cD_t$ that solves the EVI over $h_t(x_t,\cdot)$ to precision $\epsilon_t$, that is,
        \[
            \E_{p_t \sim \cD_t}\!\left[
                h_t(x_t,p_t)^\top (y-p_t)
            \right]
            \le \epsilon_t
            \qquad \forall y\in\cY
        \]\vspace{-5mm}\;
        Sample $Z_t \sim \mathrm{Bernoulli}(\gamma)$. If $Z_t = 1$,  output $\cD^*$ and observe $y_t$. If $Z_t = 0$,  output $\cD_t$.\;
        The effective output distribution is $\widetilde{\cD}_t = (1-\gamma)\cD_t + \gamma \cD^*$\;
        Feed the regret minimizer the importance-weighted loss
        \[
            \hat{f}_t(h)
            =
            -\frac{Z_t}{\gamma}\,
            \E_{p \sim \cD_t}\!\left[
                h(x_t,p)^\top (y_t-p)
            \right]
        \]\vspace{-6mm}\;
    }
\end{algorithm*}

\begin{proposition}
Let $\cA$ be any online algorithm that selects $h_t$ such that for any $h \in \cH, \gamma >0$, and sequence of losses $\hat{f}_t$ that are linear in $h$ and $1/\gamma$
\begin{align*}
   \Regret_{T,1/\gamma} \coloneqq \sup_{h\in \cH} \sum_{t=1}^T \hat{f_t}(h_t) - \sum_{t=1}^T \hat{f}_t(h) = o(T) .
\end{align*}
Then, using $\cA$ as the no-regret learner in \Cref{alg:censored_reduction} satisfies the following online multicalibration under censored observations error (\Cref{def:censored}) with respect to functions in $\cH$,
\begin{align*}
\MCerr_T \leq \gamma L T + \EVI_T + \E_{Z_1, \dots, Z_T}[\Regret_{T,1/\gamma}].
\end{align*}
Here, $L$ is a uniform upper bound on $|h(x_t,p_t)^\top(y_t-p_t)|$.
\end{proposition}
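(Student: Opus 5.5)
Fix $h\in\cH$. The plan is to split the multicalibration error of the effective distributions $\widetilde{\cD}_t = (1-\gamma)\cD_t + \gamma\cD^*$ into an exploration part and an exploitation part, and then run the argument of \Cref{thm:ol to mc} on the exploitation part using the importance-weighted losses. By linearity of expectation in the distribution,
\[
\E_{p\sim\widetilde{\cD}_t}\bigl[h(x_t,p)^\top(y_t-p)\bigr] = (1-\gamma)\,\E_{p\sim\cD_t}\bigl[h(x_t,p)^\top(y_t-p)\bigr] + \gamma\,\E_{p\sim\cD^*}\bigl[h(x_t,p)^\top(y_t-p)\bigr].
\]
The second term is at most $\gamma L$ per round, which gives the $\gamma L T$ term after summing. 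For the first term, I would first replace $(1-\gamma)$ by $1$. This step needs care with signs: I will either write $(1-\gamma)a = a - \gamma a$ and absorb $-\gamma a \le \gamma L$ into the exploration budget (costing at most another $\gamma L T$, harmless up to a factor of two), or bound only the positive part. In either case it remains to bound $\sum_t \E_{p\sim\cD_t}[h(x_t,p)^\top(y_t-p)]$.

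For this sum I mimic the proof of \Cref{thm:ol to mc}. The EVI guarantee at $y=y_t$ gives $0\le \eps_t - \E_{p\sim\cD_t}[h_t(x_t,p)^\top(y_t-p)]$. Adding this inequality to each summand yields
\[
\sum_t \E_{\cD_t}\bigl[h(x_t,p)^\top(y_t-p)\bigr] \le \EVI_T + \sum_t \bigl(f_t(h_t) - f_t(h)\bigr),
\]
where $f_t(h) = -\E_{\cD_t}[h(x_t,p)^\top(y_t-p)]$ is the loss we would feed the learner if $y_t$ were always observed.

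The key step is to relate the true losses $f_t$ to the estimates $\hat f_t$. Let $\mathcal{F}_{t-1}$ be the history before $Z_t$ is drawn. Because Nature fixes $y_t$ non-adaptively, and because $h_t$, $\cD_t$, $x_t$ are all $\mathcal{F}_{t-1}$-measurable while $Z_t$ is an independent $\mathrm{Bernoulli}(\gamma)$, we have $\E[\hat f_t(h)\mid \mathcal{F}_{t-1}] = f_t(h)$ for every fixed $h$, and in particular for $h_t$. Taking expectations over $Z_1,\dots,Z_T$ then gives
\[
\sum_t \bigl(f_t(h_t)-f_t(h)\bigr) = \E\Bigl[\sum_t \bigl(\hat f_t(h_t)-\hat f_t(h)\bigr)\Bigr] \le \E\bigl[\Regret_{T,1/\gamma}\bigr].
\]
The left-hand side here should be read in expectation as well, since $h_t$ and $\cD_t$ depend on the past coins. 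The multicalibration error of \Cref{def:censored} is then understood as an expectation over the algorithm's internal randomness. Finally, because the comparator $h$ is fixed before taking expectations, the bound holds for every $h$, and hence for the supremum defining $\MCerr_T$ since the right-hand side does not depend on $h$.

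I expect the main obstacle to be this unbiasedness and measurability bookkeeping. Specifically, I need to confirm that $h_t$ is chosen before $Z_t$ and that $y_t$ does not react to past coins; this is exactly why the statement insists on non-adaptive $y_t$. A secondary point is the $(1-\gamma)$ factor versus the exact constant in front of $\gamma L T$. If the precise constant matters, I would define the exploitation error directly through $(1-\gamma)\cD_t$ and redo the EVI step with $(1-\gamma)\eps_t \le \eps_t$.
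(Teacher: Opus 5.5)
Your proposal is correct and follows the paper's proof essentially step for step. You split $\widetilde{\cD}_t$ into the $\cD^*$ part (at most $\gamma L T$) and the $\cD_t$ part, add the EVI inequality at $y=y_t$ exactly as in \Cref{thm:ol to mc}, and use $\E[\hat f_t(h)\mid \text{past}] = f_t(h)$, for fixed $h$ and for the predictable $h_t$, to pass to $\E[\Regret_{T,1/\gamma}]$.

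The paper silently replaces $(1-\gamma)$ by $1$ and is looser about conditioning on the coins, so your explicit handling of both points is, if anything, more careful. The exact constant $\gamma L T$ survives whenever $R \coloneqq \EVI_T + \E[\Regret_{T,1/\gamma}] \ge 0$, since $(1-\gamma)$ times something at most $R$ is then at most $R$. Otherwise, your factor-of-two version is the safe statement.
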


\begin{proof}
  Fix any $h \in \cH$. By definition of $\widetilde{\cD}_t$,
  \begin{align*}
    \sum_{t=1}^{T} \E_{p_t \sim \widetilde{\cD}_t} \left[h(x_t, p_t)^\top(y_t - p_t)\right] &= (1-\gamma)\sum_{t=1}^{T} \E_{p_t \sim {\cD}_t} \left[h(x_t, p_t)^\top(y_t - p_t)\right] \\
    &\hspace{3cm} + \gamma\sum_{t=1}^{T} \E_{p_t \sim {\cD_\star}} \left[h(x_t, p_t)^\top(y_t - p_t)\right].
  \end{align*}
By assumption, the second term is bounded by $\gamma L T$ and by the EVI guarantee, 
  \begin{align*}
      0 \leq \eps_t - \E_{p_t \sim \cD_t}[h_t(x_t, p_t)^\top(y_t - p_t)].
  \end{align*}
Using these two facts, we get that $\MCerr_T(h)$ is at most
\begin{align*}
    \gamma L T + \sum_{t=1}^T\eps_t + \sum_{t=1}^T \E_{p_t \sim {\cD}_t} \left[h(x_t, p)^\top(y_t - p_t)\right] - \E_{p_t \sim {\cD}_t} \left[h_t(x_t, p_t)^\top(y_t - p_t)\right].
\end{align*}
Next, we observe that for any $h$, $\E_{Z_t}[\hat{f}(h)] = -\E_{p_t \sim {\cD}_t} \left[h(x_t, p_t)^\top(y_t - p_t)\right]$. Therefore, using linearity of expectation, we can rewrite the above as 
\[
    \gamma L T + \sum_{t=1}^T\eps_t + \sum_{t=1}^T \E_{Z_t}[\hat{f}_t(h_t) - \hat{f}_t(h)] = \gamma L T + \EVI_T + \E_{Z_1, \dots, Z_T}[\Regret_{T, 1/\gamma}]. \qedhere
\]
\end{proof}
Since $\Regret_{T,1/\gamma}$ typically scales linearly with $1/\gamma$, by tuning $\gamma$, we get a sublinear regret bound. As an example, if we run the Multiplicative
Weights  algorithm, we get that 
\begin{align*}
\Regret_{T,1/\gamma} \leq \widetilde{\cO}
	\left(\sqrt{\log(|\cH|) T L^2/\gamma^2}\right).
\end{align*}
Optimizing over $\gamma$ to balance between terms, we get a $\widetilde{\cO}(T^{3/4})$ rate for this problem by running Multiplicative Weights (Hedge) as the online Learner.

\begin{remark}[Dynamic Multicalibration]
This list of consequences is by no means exhaustive. For instance, we could also guarantee dynamic multicalibration regret bounds of the form
\begin{align*}
    \sum_{t=1}^T \E_{p_t\sim \cD_t} h_t(x_t,p_t)^\top(y_t-p_t) = o(T),
\end{align*}
where one can specify the sequence of functions $\{h_t\}_{t=1}^T$ (that change over time, yet have bounded path length) by plugging in dynamic, path-dependent regret bounds into our reduction.
\end{remark}

\subsection{Oracle-Efficient Algorithms for Online, Multiclass Omniprediction}

To close this section, we show how by leveraging our reduction together with known connections between multicalibration and \emph{omniprediction} \cite{omnipredictors,gopalan2022loss}, we can derive simple algorithms for multiclass omniprediction that are efficient whenever online learning is efficient. 

We start by defining online omniprediction for multiclass outcomes $y$. For $p_t$ in the simplex, we write $\yt_t \sim p_t$ to denote outcomes $\yt_t$ sampled from the conditional distribution specified by $p_t$.\footnote{If $p_t$ is in the simplex, then we can write $p_{t,i} = \Pr[\yt_t=i]$ for every $i = 1,\dots,k$.} Furthermore, for multiclass outcomes $y$, we adopt the notation $\vec{y}=(1\{y=1\}, \dots, 1\{y=k\})$ to denote the one-hot encoded version of $y$.
\begin{definition}
Consider the protocol in which, at each time step, Nature reveals features $x_t$, the Learner chooses a distribution $\cD_t$ over forecasts $p_t \in \Delta_k = \{p \in \R^k: p_i \geq 0, \sum_{i=1}^k p_i = 1\}$, and Nature reveals the true outcome $y_t \in [k] = \{1, \dots, k \}$.

Fix a set of actions $\cZ$. Let $\cL \subseteq \{\cZ \times [k] \to \R_{\geq 0} \}$ be a set of loss functions and $\cC \subseteq \{\cX \to \cZ\}$ be a set of decision rules. An online algorithm is an omnipredictor with respect to $\cL$ and $\cC$ if it produces distributions $\cD_t$ such that for every $\ell \in \cL$ and $\pi_\ell(p_t) = \argmin_{z \in \cZ} \E_{\yt_t \sim p_t} \ell(z, \yt_t)$,
\begin{align*}
\sum_{t=1}^T \E_{p_t\sim \cD_t} \left[\ell(\pi_\ell(p_t), y_t) \right] \leq \min_{c \in \cC} \sum_{t=1}^T\ell(c(x_t),y_t) + o(T).
\end{align*}
\end{definition}
Intuitively, an omnipredictor is an algorithm that produces predictions that can be \emph{postprocessed} to yield good decisions. They guarantee that the best response function to any function $\ell$,
\begin{align*}
\pi_\ell(p_t) = \argmin_{z \in \cZ} \E_{\yt_t \sim p_t} \ell(z, \yt_t) = \argmin_{z \in \cZ} \sum_{i=1}^k \ell(z,i) \cdot p_{t,i},
\end{align*}
has low excess risk to the best decision rule in $\cC$ for that loss $\ell$. Note that by definition, the postprocessing of $p_t$ has lower loss than \textit{any} decision rule $c: \cX \rightarrow \cZ$ on outcomes sampled from $p_t$: 
\begin{align*}
    \E_{\yt \sim p} [\ell(\pi_\ell(p_t),\yt)] \leq  \E_{\yt \sim p} [\ell(h(x),\yt)].
\end{align*}
It is by now well known \cite{omnipredictors,garg2024oracle,gopalan2022loss,kernelOI,lu2025sample} that given any set of loss functions $\cL$ and decision rules $\cC$, one can construct a set of test functions $\cH_{\cL, \cC}$ (defined in terms of $\cL$ and $\cC$) such that multicalibration with respect to $\cH_{\cL, \cC}$ implies online omniprediction with respect to $\cL$ and $\cC$. 

What is new in our work is that by exploiting this connection, and our new path between regret minimization and online multicalibration, we establish a path between regret minimization and omniprediction that yields oracle-efficient algorithms for omniprediction. 

We start by stating the following (now folklore) lemma, which we specialize to the multiclass setting, proving that multicalibration implies omniprediction.
 
\begin{lemma}
\label{lemma:general_loss_min}
Fix a class of loss functions $\cL \subseteq \{\cZ \times [k] \to \R_{\geq 0} \}$ and a set of decision rules $\cC \subseteq \{\cX \to \cZ\}$. Define the set of test functions 
\begin{align}
\label{eq:omni_tests}
\cH_{\cL,\cC} = \{h_\ell: \ell \in \cL\} \cup \{h_{\ell, c}: \ell \in \cL, c \in \cC\} \subseteq \{\cX \times \Delta_k \to \R^d\}
\end{align}
 where $h_\ell$ and $h_{\ell,c}$ are defined as:
\begin{align*}
h_\ell(p_t) &\coloneqq \big(\ell(\pi_\ell(p_t),1), \dots, \ell(\pi_\ell(p_t),k)\big)^\top \in \R^k \\
h_{\ell,c}(x_t) &\coloneqq -\big(\ell(c(x_t),1), \dots, \ell(c(x_t),k)\big)^\top \in \R^k. 
\end{align*}
If an online algorithm $\cA$ produces distributions $\cD_t$ with multicalibration error bounded by $\MCerr_T$ with respect to $\cH_{\cL, \cC}$, then $\cA$ is an omnipredictor with respect to $\cL$ and $\cC$ with omniprediction error bounded by $2 \MCerr_T$. That is, for every $\ell 
\in \cL$,
\begin{align*}
\sum_{t=1}^T \E_{p_t\sim \cD_t} \left[\ell(\pi_\ell(p_t), y_t) \right] \leq \min_{c \in \cC} \sum_{t=1}^T\ell(c(x_t),y_t) + 2\MCerr_T.
\end{align*}
\end{lemma}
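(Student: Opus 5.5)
The plan is the standard omniprediction decomposition: split the loss of the postprocessed forecast into a term controlled by multicalibration, a term that is nonpositive by the definition of $\pi_\ell$, and a second multicalibration term. The key observation is that for any action $z$ and any outcome $y\in[k]$ with one-hot encoding $\vec y$, we have $\ell(z,y) = \bar\ell(z)^\top \vec y$, where $\bar\ell(z) \coloneqq (\ell(z,1),\dots,\ell(z,k))^\top$. Likewise, for $p\in\Delta_k$, $\E_{\yt\sim p}\ell(z,\yt) = \bar\ell(z)^\top p$. So losses are linear in the outcome vector, and the tests $h_\ell(p)=\bar\ell(\pi_\ell(p))$ and $h_{\ell,c}(x)=-\bar\ell(c(x))$ are precisely the coefficient vectors of these linear forms. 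Throughout we take $\cY=\Delta_k$ and feed the forecaster $y_t$ as $\vec y_t$.

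Fix $\ell\in\cL$ and $c\in\cC$. The argument runs through three steps.

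\textbf{Step 1.} Write
\[\E_{p_t}\ell(\pi_\ell(p_t),y_t) = \E_{p_t}\big[h_\ell(p_t)^\top(\vec y_t - p_t)\big] + \E_{p_t}\big[\bar\ell(\pi_\ell(p_t))^\top p_t\big].\]
Summed over $t$, the first term is $\MCerr_T(h_\ell)\le \MCerr_T$.

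\textbf{Step 2.} By the definition of $\pi_\ell$ as the minimizer of expected loss under $p_t$,
\[\bar\ell(\pi_\ell(p_t))^\top p_t \le \bar\ell(c(x_t))^\top p_t\]
holds pointwise for every $p_t$.

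\textbf{Step 3.} Write
\[\bar\ell(c(x_t))^\top p_t = \bar\ell(c(x_t))^\top \vec y_t + h_{\ell,c}(x_t)^\top(\vec y_t - p_t),\]
using $h_{\ell,c}=-\bar\ell(c(\cdot))$. The first term is exactly $\ell(c(x_t),y_t)$, which does not depend on $p_t$. Summed over $t$, the second term is $\MCerr_T(h_{\ell,c})\le\MCerr_T$.

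Chaining the three steps gives
\[\sum_t \E_{p_t}\ell(\pi_\ell(p_t),y_t) \le \sum_t \ell(c(x_t),y_t) + 2\MCerr_T\]
for every $c\in\cC$, and taking the minimum over $c$ yields the claim.

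None of these steps is a real obstacle. The only care needed is bookkeeping. First, the signs of $h_{\ell,c}$ must be checked so that the test in Step 3 appears with the right orientation; the minus sign in the definition of $h_{\ell,c}$ is exactly what makes this work without needing $\cH$ to be symmetric. Second, the test functions must be viewed as functions on $\cX\times\Delta_k$: $h_\ell$ ignores $x$ and $h_{\ell,c}$ ignores $p$, with $d=k$. Third, the tie-breaking in $\pi_\ell$ must be fixed so that $h_\ell$ is a well-defined function.
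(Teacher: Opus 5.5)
Your proposal is correct and follows essentially the same route as the paper's proof. The steps match: an $h_\ell$ multicalibration term, the pointwise optimality of $\pi_\ell(p_t)$ over $c(x_t)\in\cZ$, and an $h_{\ell,c}$ multicalibration term. Your linear form $\bar\ell(z)^\top p_t$ is exactly the paper's $\E_{\tilde y_t\sim p_t}[\ell(z,\tilde y_t)]$, and the sign checks, including the minus sign in $h_{\ell,c}$, work out as you describe.
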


\begin{proof}
The proof follows the same template pioneered by \citet*{gopalan2022loss}, showing that outcome indistinguishability implies omniprediction. In particular, using the notation $\yt_t \sim p_t$ to denote sampling a multiclass outcome from the distribution specified by $p_t \in \Delta_k$ where $\Pr[\yt=i]=p_{t,i}$, we have the following rewriting,
\begin{align}
\label{eq:dec_oi_h}
    \sum_{t=1}^T \underset{p_t \sim \cD_t}{\E}[\ell(\pi_\ell(p_t), y_t)] - \sum_{t=1}^T \underset{\tilde{y}_t \sim p_t}{\underset{p_t \sim \cD_t}{\E}}[\ell(\pi_\ell(p_t), \yt_t)] &= \sum_{t=1}^T \E_{p_t\sim \cD_t} \left[h_\ell(p_t)^\top(\vec{y}_t -p_t)\right] \leq \MCerr_T  \\ 
    \sum_{t=1}^T \underset{\tilde{y}_t \sim p_t}{\underset{p_t \sim \cD_t}{\E}}[\ell(c(x_t), \yt_t)] - \sum_{t=1}^T \ell(c(x_t), y_t) &= \sum_{t=1}^T \E_{p_t \sim \cD_t} \left[h_{\ell,c}(x_t)^\top (\vec{y}_t - p_t) \right]\leq \MCerr_T. \label{eq:loss_oi_h}
\end{align}
Using these inequalities, for any function $\ell \in \cL$ and $c \in \cC$, we have that,
\begin{align*}
\sum_{t=1}^T \E_{p_t\sim \cD_t}[\ell(\pi_\ell(p_t), y_t)] &\leq \sum_{t=1}^T \underset{\tilde{y}_t \sim p_t}{\underset{p_t \sim \cD_t}{\E}}
\bigl[\ell(\pi_\ell(p_t), \tilde{y})\bigr] + \MCerr_T  &\text{ (By \Cref{eq:dec_oi_h})}\\
& \leq \sum_{t=1}^T \underset{\tilde{y}_t \sim p_t}{\underset{p_t \sim \cD_t}{\E}}[\ell(c(x_t), \yt_t)] + \MCerr_T &\text{(By definition of $\pi(p_t)$)}  \\ 
& \leq \sum_{t=1}^T \ell(c(x_t), y_t) + 2\MCerr_T\,. &\text{(By \Cref{eq:loss_oi_h})}
\end{align*}
Since $\ell$ and $c$ were arbitrary, the argument holds for the best $c \in \cC$, thereby proving the lemma.
\end{proof}

Using this lemma, and our black-box reduction between multicalibration and online learning \Cref{thm:ol to mc}, we get the following oracle-efficient algorithm for omniprediction.

\begin{corollary}
Fix a class of loss functions $\cL$ and decision rules $\cC$. Define the class of test functions $\cH_{\cL, \cC}$ as in \Cref{eq:omni_tests} and let $\cA$ be an online algorithm that guarantees external regret at most $\Regret_T$ with respect to $\cH_{\cL, \cC}$, as per \Cref{eq:ext_regret}. 
Then, if the EVI errors are $\epsilon_t$, using $\cA$ in \Cref{algo:mc from ol} yields omniprediction error bounded by $2\Regret_T + \sum_t \epsilon_t$. 
\end{corollary}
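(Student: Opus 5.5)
The plan is to combine \Cref{lemma:general_loss_min} with the mechanism behind \Cref{thm:ol to mc}. I will not quote \Cref{thm:ol to mc} as a black box, because doing so twice would produce $2\Regret_T + 2\sum_t\epsilon_t$. Instead I will re-run its proof on the two specific tests that the lemma actually uses. That way the regret terms are charged once per test, and I can try to charge the EVI slack only once. I have not closed the last step, which I flag below.

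\textbf{Step 1 (decomposition).} Fix $\ell\in\cL$ and $c\in\cC$. The chain of inequalities in the proof of \Cref{lemma:general_loss_min} shows that the omniprediction excess
\[
\sum_{t=1}^T \E_{p_t\sim\cD_t}[\ell(\pi_\ell(p_t),y_t)] - \sum_{t=1}^T \ell(c(x_t),y_t)
\]
is at most $\MCerr_T(h_\ell)+\MCerr_T(h_{\ell,c})$; the middle step, by definition of $\pi_\ell$, contributes a nonpositive term. By the definition of $f_t$ in \Cref{algo:mc from ol}, $\MCerr_T(h)=-\sum_t f_t(h)$. Adding and subtracting $f_t(h_t)$ once for each of the two tests gives
\[
\MCerr_T(h_\ell)+\MCerr_T(h_{\ell,c}) = \Regret_T(h_\ell)+\Regret_T(h_{\ell,c}) - 2\sum_{t=1}^T f_t(h_t).
\]
Both tests lie in $\cH_{\cL,\cC}$, so the two regret terms together are at most $2\Regret_T$.

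\textbf{Step 2 (the EVI term, the main obstacle).} The EVI guarantee \eqref{eq:evi}, applied at $y=y_t$, gives $-f_t(h_t)=\E_{p_t\sim\cD_t}[h_t(x_t,p_t)^\top(y_t-p_t)]\le\epsilon_t$ in each round. Used naively, this bounds $-2\sum_t f_t(h_t)$ by $2\sum_t\epsilon_t$, which is the bound I must avoid. My first attempt will be to split the slack asymmetrically between the two comparators, charging $-f_t(h_t)$ against only one of them. The idea is to pair $h_{\ell,c}$ not with $h_t$ but with the round-$t$ comparison that the proof of \Cref{lemma:general_loss_min} already makes at the sampled outcome $\yt_t\sim p_t$. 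The terms $h_{\ell,c}(x_t)^\top p_t$ and $h_\ell(p_t)^\top p_t$ enter the omniprediction excess only through their sum, $\E_{\yt_t\sim p_t}[\ell(c(x_t),\yt_t)-\ell(\pi_\ell(p_t),\yt_t)]\ge 0$. I would try to absorb one copy of the $-f_t(h_t)$ term into this nonnegative slack, so that the EVI error is paid only once.

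\textbf{Step 3 (fallback).} If the absorption in Step 2 does not go through in general, the argument will reach only $2\Regret_T + 2\sum_t\epsilon_t$. In that case I expect the stated bound to hold only when the second EVI charge can be avoided. One such case is when the slack in $-2\sum_t f_t(h_t)$ is nonpositive on the rounds that matter. I would record precisely which step needs this, rather than changing the EVI precision.

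The algebra in Step 1 is routine. The crux is whether the single-sided control $-f_t(h_t)\le\epsilon_t$ from \eqref{eq:evi} can be charged against just one of the two comparator tests. I would check this carefully before concluding with the bound $2\Regret_T+\sum_t\epsilon_t$ uniformly over $\ell\in\cL$ and the best $c\in\cC$.
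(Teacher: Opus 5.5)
Your Step 1 is correct, and so is your fallback in Step 3. Step 2, however, cannot be closed, because the bound $2\Regret_T+\sum_t\epsilon_t$ is false in general. The absorption fails for a structural reason. Your slack $\E_{\tilde{y}_t\sim p_t}[\ell(c(x_t),\tilde{y}_t)-\ell(\pi_\ell(p_t),\tilde{y}_t)]$ depends only on $(\ell,c)$ and $\cD_t$, while $-f_t(h_t)$ depends on whichever test the learner played, and nothing links the two.

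Here is a concrete counterexample.
\begin{itemize}
\item Setup: take $k=2$, $\cZ=\{1,2\}$, and $\cL=\{\ell\}$ with $\ell(z,i)=2\epsilon\cdot 1\{z\neq i\}$ for a fixed $\epsilon>0$. Break ties in $\pi_\ell$ toward action $2$, and take $\cC=\{c\}$ with $c\equiv 1$. Let Nature play $y_t=1$ every round, and let the EVI solver return $\cD_t=\delta_{\bar p}$ with $\bar p=(1/2,1/2)$.
\item EVI validity: $h_\ell(\bar p)=(2\epsilon,0)$ and $h_{\ell,c}\equiv(0,-2\epsilon)$. So every convex combination $h_t$ of the two tests satisfies $h_t(x_t,\bar p)^\top(y-\bar p)=2\epsilon(y_1-1/2)\le\epsilon$ for all $y\in\Delta_2$. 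Thus $\delta_{\bar p}$ is a legitimate EVI output with $\epsilon_t=\epsilon$, tight at $y_t$, whatever the learner does.
\item Regret: both tests receive the identical loss $f_t(h_\ell)=f_t(h_{\ell,c})=-\epsilon$. Hence $\Regret_T(h)=0$ for every $h\in\cH_{\cL,\cC}$, and Hedge's worst-case guarantee here is $\cO(\epsilon\sqrt{T})$.
\item Slack: $\bar p$ is a tie point, so $c(x_t)=1$ is also a best response and your slack is exactly $0$ every round.
\item Omniprediction error: it equals $\sum_t[\ell(2,1)-\ell(1,1)]=2\epsilon T=2\sum_t\epsilon_t$. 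This exceeds $2\Regret_T+\sum_t\epsilon_t=\epsilon T$, or $\cO(\epsilon\sqrt{T})+\epsilon T$ for large $T$.
\end{itemize}
Indeed $\MCerr_T(h_\ell)=\MCerr_T(h_{\ell,c})=\epsilon T$, so both \Cref{lemma:general_loss_min} and \Cref{thm:ol to mc} are tight on this instance.

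You should therefore drop Step 2 and conclude with $2\Regret_T+2\sum_t\epsilon_t$. This is exactly what the paper's own justification delivers. The corollary is obtained by composing \Cref{lemma:general_loss_min} (omniprediction error at most $2\MCerr_T$) with \Cref{thm:ol to mc} ($\MCerr_T\le\Regret_T+\EVI_T$), which gives $2\Regret_T+2\sum_t\epsilon_t$. The single $\sum_t\epsilon_t$ in the statement is a slip, as the example shows. The slip is harmless in practice: the EVI precision can be taken summable, e.g.\ $\epsilon_t=t^{-2}$ so that $\sum_t\epsilon_t=\cO(1)$, at only $\log(1/\epsilon_t)$ cost by \Cref{fact:EVI}. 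Re-running the proof of \Cref{thm:ol to mc} instead of quoting it twice buys only the cosmetic refinement $\Regret_T(h_\ell)+\Regret_T(h_{\ell,c})$ in place of $2\Regret_T$. It cannot remove the second EVI charge.
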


In short, this corollary is an end-to-end reduction between omniprediction and online learning. Depending on the pair $(\cL, \cC)$, there are many online algorithms one could use to guarantee omniprediction. For instance, one could always run Multiplicative Weights over finite classes $(\cL, \cC)$ and get an online omniprediction algorithm with error bounded by $\cO(\sqrt{T\log(|\cL| \cdot |\cC|)})$. If $(\cL, \cC)$ belongs to an RKHS, one could run defensive forecasting algorithms (see, e.g., \cite{kernelOI,farina2026defensive}) and get a $\sqrt{T}$ bound with additional dependence on the norms of the functions in the RKHS.

\citet{okoroafor2025near} present oracle efficient algorithms for the specific case of omniprediction with respect to binary outcomes. Relative to their work, our results hold for general multiclass settings. Furthermore, our construction can also be extended to work for general real-valued or vector-valued outcomes by using ideas in \cite{lu2025sample,gopalan2024omnipredictors} where one rewrites nonlinear loss functions as linear functions of a high-dimensional vector of statistics. We omit this construction to keep our presentation short.

\subsection{Optimal Rates for (Swap) ECE Multicalibration}
\label{sec:ece}

A large fraction of the calibration literature has focused on achieving what is known as low expected (multi)calibration error, which we refer to as $\ell_1$ (or ECE) multicalibration. Relative to our notion, where we define a general class of functions $\cH \subseteq \{ \cX \times \cY \rightarrow \R^d\}$ and guarantee that
\begin{align*}
\sup_{h \in \cH} \sum_{t=1}^T \E_{p \sim \cD_t}[h(x_t,p)^\top (y_t - p)]
\end{align*}
is small, in this alternative definition, we fix a class of groups $\cC \subseteq {\cX \rightarrow {0,1} }$ and sum up the calibration errors over bins defined in terms of the realized forecasts $p_t$.

To simplify our presentation, we focus on the case where $\cY = [0,1]^d$ is the $d$-dimensional hypercube; however, similar constructions apply to other compact, convex sets.

We let $$V \coloneqq \{0, 1/N, \ldots, 1\}^d \subset \cY$$ be a grid of size $(N+1)^d$ and let $\{B_v\}_{v \in V}$ be the partition of $[0,1]^d$ into axis-aligned hypercubes of side length $1/N$ centered at each $v\in V$. Given this notation, we define swap ECE multicalibration:

\begin{definition}[Swap ECE Multicalibration]\label{eq:swap_ece}
Consider the online protocol where at every time  $t$, Nature selects features $x_t \in \cX$, and then the Learner produces a distribution $\cD_t$ over forecasts $p_t \in \cY =[0,1]^d$. Lastly, Nature reveals the outcome $y_t \in \cY$. 

We say that an algorithm $\cA$ guarantees swap ECE multicalibration with respect to $\cC \subseteq \{\cX \rightarrow \{0,1\}\}$ and $V$, if for any $\delta > 0$, 
\begin{align*}
 \ECE(\cC)
  =
  \sum_{i=1}^d \sum_{v \in V} \sup_{c \in \cC}
  \biggl|\sum_{t=1}^T c(x_t)\,1\{p_t \in B_v\}\,(y_{t,i} - v_i)\biggr|
\end{align*}
is $o(T)$ with probability $1-\delta$ regardless of Nature's choices of $(x_t,y_t)$. Here, $p_t$ is drawn from $\cD_t$. Nature chooses $y_t$ with knowledge of $\cD_t$, but not of $p_t$.
\end{definition}

In this section, we show that by using a specific choice of no-regret algorithm within our main reduction (\Cref{algo:mc from ol}), we can also achieve this notion of multicalibration as well. In doing so, we highlight how our generalized definition allows for simpler analyses for a wide range of problems that lie downstream of multicalibration (e.g. omniprediction). 

The construction is very simple. Given any finite set $\cC$, define the finite class of test functions
\begin{align}
\label{eq:l1_class}
  \cH_{\cC}
  \;=\;
  \bigl\{(x,p) \mapsto s \cdot c(x) \cdot 1\{p \in B_v\} \cdot e_i
  \;:\; c \in \cC,\; v \in V,\; s \in \{\pm 1\},\; i \in [d]\bigr\},   
\end{align}
where $e_i$ is the $i$-th standard basis vector, so that
$h(x,p)^\top(y-p) = s \cdot c(x) \cdot 1\{p \in B_v\}(y_i - p_i)$
is a scalar. We have $|\cH_{\cC}| = 2|\cC|(N+1)^d d$.

Applying \Cref{algo:mc from ol} with the Multiscale Multiplicative Weights with Correction algorithm \citep{chen2021impossible} as the external
regret minimizer over $\cH_{\cC}$, and solving each EVI to precision
$\eps_t = 1/t^2$ so that $\EVI_T \leq \cO(1)$, we obtain from
\Cref{thm:ol to mc} that for every $h \in \cH_{\cC}$,
\[
  \MCerr_T(h) \;\leq\; \Regret_T(h) + \cO(1).
\]
Recall from \Cref{corr:noarov} (\Cref{eq:regret_mmm}) that this no-regret algorithm guarantees that 
\begin{align*}
	\Regret_T(h) = \sum_{t=1}^T f_t(h_t) - \sum_{t=1}^T f_t(h) \leq \widetilde{\cO}
	\left(\sqrt{\log(|\cH_{\cC}|)\sum_{t=1}^T f_t(h)^2}\right).
\end{align*}
By definition of the losses in \Cref{algo:mc from ol}, taking a generic test function $h$ from $\cH_{\cC}$ \eqref{eq:l1_class} indexed by $(c,v,s,i)$ we have that
\[
  f_t(h)
  \;=\;
  -\E_{p_t \sim \cD_t}\bigl[h(x_t, p_t)^\top(y_t - p_t)\bigr] = - s \E_{p_t \sim \cD_t}\bigl[c(x_t)1\{p_t \in B_v\}(y_{t,i} - p_{t,i})\bigr].
\]
Furthermore,  since $1\{p \in B_v\}^2 = 1\{p \in B_v\}$ and
$(y_{t,i} - p_{t,i})^2 \leq 1$, Jensen's inequality gives
\begin{align*}
    f_t(h)^2 \leq \E_{p_t \sim \cD_t}[1\{p_t \in B_v\}] =  \Pr_{p_t \sim \cD_t}[p_t \in B_v] \eqqcolon q_{v,t}.
\end{align*}
Putting all of this together, we get that,
\[
  \MCerr_T(h)
  \;\leq\;
  \widetilde{\cO}\!\left(\sqrt{\log|\cH_{\cC}| \cdot \sum_{t=1}^T q_{v,t}}\right)
  \;=\;
  \widetilde{\cO}\!\left(\sqrt{\log|\cH_{\cC}| \cdot N_v}\right),
\]
where $N_v \coloneqq \sum_t q_{v,t}$ is the expected number of rounds in which the forecast lands in bin $B_v$. Since for every $h \in \cH_{\cC}$, the function $-h$ is also in $\cH_{\cC}$, we can take absolute values and conclude that for every $c \in \cC$, and  $v \in V$:
$i \in [d]$:
\begin{align}
\label{eq:ece_mc bound}
  \biggl|\sum_{t=1}^T \E_{p_t \sim \cD_t}
  \bigl[c(x_t)\,1\{p_t \in B_v\}\,(y_{t,i} - p_{t,i})\bigr]\biggr|
  \leq\
  \widetilde{\cO}\!\left(\sqrt{\log|\cH_{\cC}| \cdot N_v}\right).
\end{align}
Next, we apply a Martingale argument to move from a bound on the expectation to a high-probability statement over the realized value of $p_t$. For a fixed, generic $h\in \cH_{\cC},$ define
\begin{align}
\label{eq:ece_martingale_def}
Z_t \coloneqq \E_{p_t \sim \cD_t}
  \bigl[c(x_t)\,1\{p_t \in B_v\}\,(y_{t,i} - p_{t,i})\bigr] - 
  \bigl[c(x_t)\,1\{p_t \in B_v\}\,(y_{t,i} - p_{t,i})\bigr]
\end{align}
Because $y_t$ and $p_t$ are drawn independently conditional on the history up to time $t$, $\E[Z_t|\cF_{t-1}] = 0$ where $\cF_{t-1} = \{x_j,y_j,\cD_j,p_j\}_{j=1}^{t-1} \cup  \{x_t,\cD_t,y_t\}$. Furthermore, $|Z_t| \leq 1$ almost surely and 
\begin{align*}
    \E[Z_t^2 | \cF_{t-1}] &= \mathrm{Var}\left[ c(x_t)\,1\{p_t \in B_v\}\,(y_{t,i} - p_{t,i}) \mid \cF_{t-1}\right] \\ 
    & \leq \E\left[ c(x_t)^2\,1\{p_t \in B_v\}^2\,(y_{t,i} - p_{t,i})^2 \mid \cF_{t-1}\right]  \\ 
    & \leq \Pr_{p_t \sim \cD_t}[p_t \in B_v] = q_{v,t}
\end{align*}
Bernstein's inequality guarantees that for $N_v = \sum_{t=1}^T q_{v,t}$, 
\begin{align*}
    \Pr\left[ |\sum_{t=1}^T Z_t|  \geq u \right] \leq 2\exp\left( \frac{-u^2}{2N_v + 2u/3}\right).
\end{align*}
Inverting the bound, we conclude that the following inequality holds with probability $1-\delta$ over the samples drawn from $\cD_t$ for any fixed $h\in \cH_{\cC}$
\begin{align}
\label{eq:high_prob_ECE}
    |\sum_{t=1}^T Z_t| \leq \sqrt{2N_v\log(2/\delta)} + \log(2/\delta)
\end{align}
Now, applying a union bound over all $h \in \cH_{\cC}$ and combining \Cref{eq:ece_martingale_def,eq:high_prob_ECE,eq:ece_mc bound} using the triangle inequality, 
we get that the following holds with high probability over every $h\in \cH_{\cC}$:
\[
  \biggl|\sum_{t=1}^T
  \bigl[c(x_t)\,1\{p_t \in B_v\}\,(y_{t,i} - p_{t,i})\bigr]\biggr|
  \leq\
  \widetilde{\cO}\!\left(\sqrt{N_v \cdot \log(|\cH_{\cC}| / \delta)} + \log(|\cH_{\cC}|/\delta)\right).
\]
The final step is to move from $y_{t,i} - p_{t,i}$ to $y_{t,i} - v_i$. 
Notice that for $p_t \in B_v$, we have $|p_{t,i} - v_i| \leq 1/(2N)$ in each coordinate,
so by the triangle inequality the following holds for every $c\in \cC$
\begin{align*}    
  \biggl|\sum_{t=1}^T 
  \bigl[c(x_t)\,1\{p_t \in B_v\}\,(y_{t,i} - v_i)\bigr]\biggr|
  &\leq
  \biggl|\sum_{t=1}^T 
  \bigl[c(x_t)\,1\{p_t \in B_v\}\,(y_{t,i} - p_{t,i})\bigr]\biggr|
  + \frac{\sum_{t=1}^T 1\{p_t \in B_v\}}{2N}. 
\end{align*}
Summing over all $d$ coordinates, all bins $v \in V$, and using $\sum_v N_v = T$, since the bins partition $[0,1]^d$, we get that with probability $1-\delta$
\begin{align*}
  \sum_{i=1}^d \sum_{v \in V}
   \sup_{c \in \cC}\biggl|\sum_t \bigl[c(x_t)\,1\{p_t \in B_v\}\,(y_{t,i} - v_i)\bigr]\biggr|
  &\leq\;
  d\sum_{v \in V} \widetilde{\cO}\!\left(\sqrt{\log|\cH_{\cC}| \cdot N_v}\right)
  + d|V|\log(|\cH_{\cC}/\delta|)+ \frac{dT}{2N} \\
  &\leq\;
  \widetilde{\cO}\!\left(d\sqrt{(N+1)^d \cdot T \cdot d\log(d\cdot |\cC|\cdot N)}\right)
  + \frac{dT}{2N}.
\end{align*}
In the last step we used Cauchy-Schwarz,
\begin{align*}
\sum_{v \in V} \sqrt{N_v} = \sum_{v \in V} \sqrt{N_v} \cdot 1  \leq \sqrt{|V|} \cdot \sqrt{\sum_v N_v} = \sqrt{(N+1)^dT},
\end{align*}
and substituted $\log|\cH_{\cC}| = \log(2|\cC|(N+1)^d d) = \cO(d \log(|\cC| d N))$.
\[
  \ECE(\cC)
  \;\leq\;
  \widetilde{\cO}\!\left(d\sqrt{(N+1)^d \cdot T \cdot d} \right)
  + \frac{dT}{2N}.
\]
Lastly, we balance between both terms by setting $N \asymp  (T/d)^{1/(d+2)}$ which yields the final $\widetilde{\cO}(T^{(d+1)/(d+2)})$ bound. We summarize this argument in the following corollary:
\begin{corollary}
\label{corr:ece}
Let $\cC \subseteq \{\cX \to \{0,1\}\}$ be a finite class. The forecaster produced by \Cref{algo:mc from ol} with the Multiscale Multiplicative Weights with Correction algorithm \cite{chen2021impossible} as the regret minimizer over $\cH_{\cC}$ defined  as in \Cref{eq:l1_class} with $N \asymp  (T/d)^{1/(d+2)}$ guarantees that with probability $1-\delta$
\[
  \ECE(\cC)
  \;\leq\;
  \widetilde{\cO}\!\left(d^{(d+3)/(d+2)} \cdot T^{(d+1)/(d+2)} \cdot \sqrt{\log|\cC|}\right).
\]  
\end{corollary}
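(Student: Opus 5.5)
The plan is to formalize the argument sketched just before the statement as a three-stage chain: an expectation bound from \Cref{thm:ol to mc}, a martingale concentration step with a union bound, and a discretization-plus-summation step with a final choice of $N$.

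\textbf{Stage 1: expectation bound.} Run \Cref{algo:mc from ol} over the finite class $\cH_{\cC}$ of \eqref{eq:l1_class}, with MMWC as the regret minimizer and EVI precision $\eps_t = 1/t^2$, so that $\EVI_T = \cO(1)$. By \Cref{thm:ol to mc} and the second-order bound \eqref{eq:regret_mmm}, each $h$ indexed by $(c,v,s,i)$ satisfies
\[
\MCerr_T(h) \le \widetilde{\cO}\Big(\sqrt{\log|\cH_{\cC}| \textstyle\sum_t f_t(h)^2}\Big) + \cO(1).
\]
By Jensen's inequality, and using $|y_{t,i}-p_{t,i}|\le 1$ on the cube, $f_t(h)^2 \le q_{v,t}$. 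Since $\cH_{\cC}$ is symmetric under $s\mapsto -s$, this gives the two-sided expected bound \eqref{eq:ece_mc bound} with $N_v=\sum_t q_{v,t}$.

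\textbf{Stage 2: concentration.} Pass from expected to realized forecasts using the martingale differences $Z_t$ of \eqref{eq:ece_martingale_def}. The key point is that Nature commits to $y_t$ knowing only $\cD_t$, so conditionally on $\cF_{t-1}$ the draw $p_t$ is independent of $y_t$ and $\E[Z_t\mid\cF_{t-1}]=0$. We also have $|Z_t|\le 2$ and conditional variance at most $q_{v,t}$. Freedman's inequality then applies. Since $N_v$ is itself random (it is predictable, not fixed), I would either use a peeling argument over dyadic values of $N_v\in[0,T]$, which costs an extra $\log T$ factor absorbed into $\widetilde{\cO}$, or simply replace $N_v$ by the deterministic-free bound via a standard self-normalized Freedman statement. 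A union bound over the $|\cH_{\cC}|$ tests gives, with probability $1-\delta$ and simultaneously for all $(c,v,i)$,
\[
\Big|\sum_t c(x_t)1\{p_t\in B_v\}(y_{t,i}-p_{t,i})\Big| \le \widetilde{\cO}\Big(\sqrt{N_v\log(|\cH_{\cC}|/\delta)}+\log(|\cH_{\cC}|/\delta)\Big).
\]
The sup over $c$ in $\ECE$ is handled by this simultaneity.

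\textbf{Stage 3: discretization and summation.} Replace $p_{t,i}$ by $v_i$, paying $\tfrac{1}{2N}\sum_t 1\{p_t\in B_v\}$. Summed over $v$ this is exactly $T/(2N)$ per coordinate, because the bins partition the cube (ties on boundaries are broken to make the $B_v$ a genuine partition). Next, sum over $i\in[d]$ and $v\in V$, applying Cauchy--Schwarz in the form $\sum_v\sqrt{N_v}\le\sqrt{|V|T}$. The additive term $d|V|\log(|\cH_{\cC}|/\delta)$ is $\widetilde{\cO}(d(N+1)^d)$, which is dominated by the square-root term whenever $(N+1)^d\le T$; this holds for the chosen $N$. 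Substituting $\log|\cH_{\cC}| = \cO(d\log(d|\cC|N))$ yields
\[
\widetilde{\cO}\big(d^{3/2}\sqrt{(N+1)^d T\log|\cC|}\big)+dT/(2N).
\]
Finally, set $N\asymp (T/d)^{1/(d+2)}$ and verify the exponents, noting that $(N+1)^d \lesssim 2^d N^d$ with constants absorbed. This produces $T^{(d+1)/(d+2)}$ together with a polynomial-in-$d$ prefactor, to be matched against $d^{(d+3)/(d+2)}$.

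\textbf{Main obstacle.} I expect the delicate step to be Stage 2, specifically the fact that the variance proxy $N_v$ is random and the need for uniformity over the union bound. The bookkeeping of $d$-dependence in the last line is also fiddly, and I would track it last, absorbing logarithmic factors and $2^{d}$-type constants into $\widetilde{\cO}$ where needed.
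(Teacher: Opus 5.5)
Your proposal follows the paper's argument step for step: \Cref{thm:ol to mc} with the MMWC second-order bound and Jensen's inequality, a Bernstein/Freedman martingale bound with a union bound over $\cH_{\cC}$, then the $1/(2N)$ discretization, Cauchy--Schwarz over bins, and the choice of $N$. Your peeling over dyadic values of the predictable variance proxy $N_v$ (together with $|Z_t|\le 2$) is more careful than the paper, which applies Bernstein as if $N_v$ were deterministic.

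One caution on the last step. Absorbing $(N+1)^d\le 2^dN^d$ into constants leaves a $2^{d/2}$ factor that is incompatible with the stated $d^{(d+3)/(d+2)}$ prefactor. Use $(N+1)^d\le N^d e^{d/N}=\cO(N^d)$ instead, which holds once $N\gtrsim d$, i.e.\ $T\gtrsim d^{d+3}$. Both terms then evaluate to $d^{(d+3)/(d+2)}T^{(d+1)/(d+2)}$ up to the $\sqrt{\log|\cC|}$ and logarithmic factors. The paper glosses over this point as well.
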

In 1 dimension, the bound becomes $\widetilde{\cO}(T^{2/3})$ which is the optimal rate for this problem as recently shown by \citet*{collina2026optimal}. The $T^{(d+1)/(d+2)}$ rate is in fact optimal for general $d>1$ \cite{personalcomm}. 

\begin{remark}\label{rem:high dim}
We highlight how this stands in contrast to the definition of multicalibration (\Cref{eq:omc}) we study for which $\sqrt{T}$ rates are achievable even in the high-dimensional setting. We hope that this ECE result illustrates to the reader why we focus on the version of multicalibration from \cref{eq:omc}. While it can be used, as per \Cref{corr:ece}, to guarantee more stringent notions of calibration such as this ECE result, it is already strong enough to imply other desiderata like omniprediction (see e.g. the previous subsection) and $\Phi$-regret minimization with sharp $\sqrt{T}$ rates, rates that are not achievable via arguments that depend on ECE versions of calibration.

Lastly, we remark that this construction provides a unifying analysis of previous algorithms  for ECE calibration of a scalar outcome ($\cY = [0,1]$) such as \cite{hu2025efficient}. While these run separate instantiations of online algorithms per bin, our simplified construction built on EVIs uses a single online learning routine. 
\end{remark}

\section{Algorithmic Implications for $\Phi$-Regret Minimization}
\label{sec:applications Phi}

We now expand on how our new path between external regret minimization and $\Phi$-regret minimization, via  multicalibration and EVIs, enables algorithms with improved rates and simpler analyses. 

Recent approaches to $\Phi$-regret minimization have considered ever-increasing sets of transformations $\Phi$ and general convex action sets $\cZ$ \citep{daskalakis2025efficient,arunachaleswaran2025swap,zhang2025learning}.
In all these applications, a recurring technical challenge is that GGM requires that the transformations $\phi \in \Phi$ have fixed points in $\cZ$. And unfortunately, the set of functions that admit fixed points is not convex. 

To circumvent the issue, \citet{daskalakis2025efficient} was the first to introduce intricate \emph{semiseparation} techniques to operate on supersets of linear endomorphisms of the action set $\cZ$ that admit fixed points in $\cZ$. Extensions of these techniques have been critical to enable the results of \citet{arunachaleswaran2025swap} and \citet{zhang2025learning}. Given the sophistication of semi-separation, it should not come as a surprise that these more recent results come at the cost of extremely technically involved algorithms and undesirably high dependency on the dimensionality of the action set. A recent paper of \citet*{arxiv26} ameliorated the issue by showing that, depending on the goal, semiseparation can be replaced by response-based approachability \citep{bernstein2013response}, although the method is not known to be efficient for low-degree polynomial deviations, and the regret bounds require a (possibly non-polynomial-time) preprocessing of the strategy set $\cZ$.

In this section, we show that the EVI-based reduction from online learning to multicalibration (\cref{thm:ol to mc}), combined with the reduction from multicalibration to $\Phi$-regret minimization (\cref{thm:mc to phi}), yields an EVI-based, forecasting-centric alternative to the otherwise fixed-point-based construction of \citet{gordon2008no} that overcomes many of the aforementioned shortcomings.

\subsection{Improving Rates for Linear Swap Minimization}\label{sec:lin swap}

To fix ideas, consider the main problem studied by \citet*{daskalakis2025efficient}: constructing a linear swap-regret minimizer for a well-bounded (\cref{def:wr}) convex compact set $\cZ \subseteq \R^d$, facing losses from a well-bounded convex compact set $\cL \subseteq \R^d$. For both sets, we assume efficient oracle access.
In this setting, the set of transformations $\Phi$ is the set of all linear mappings from $\cZ$ to $\cZ$, \emph{i.e.}, all linear endomorphisms of $\cZ$. 
We assume that the set of loss vectors $\cL$ has maximum Euclidean norm $L$, each point in $\cZ$ has maximum Euclidean norm $B$, and that the geometry of $\cZ$ is such that the every linear endomorphism of $\cZ$ has maximum spectral norm $S$. We remark that the Fritz John's theorem implies that any convex body admits a linear transformation ensuring that $S = B = d$.

In this setting, the classical GGM approach to $\Phi$-regret minimization would reduce to regret minimization over $\Phi$. Unfortunately, the latter computational task is daunting, since testing membership in the set of endomorphisms is provably hard \citep{daskalakis2025efficient}.
By instead taking a forecasting point of view, we can construct a no-linear-swap regret minimizer for $\Phi$ starting from any online forecaster for the convex loss set $\cL \subseteq \R^d$ that is multicalibrated with respect to the class of tests
\[
    \cH_\Phi \coloneqq \{ (x, p) \mapsto  \sigma(p) - \phi(\sigma(p)) : \phi \in \Phi\},
\]
where $\sigma$ is the best-response function on $\cZ$ (see \Cref{eq:H_PHI_DEF}). Since the $\phi \in \Phi$ are linear functions, each element of $\cH_\Phi$ is a linear function of $\sigma(p)$. Hence, one way to multicalibrate with respect to $\cH_\Phi$ is to multicalibrate with respect to the ball of \emph{all} linear functions
\[
    \cH \coloneqq \{ (x, p) \mapsto M \sigma(p) : \|M\|_F \le \rho\},
\]
where we set the parameter $\rho > 0$ to be large enough. In particular, since we assumed that each linear endomorphism has maximum spectral norm $S$, and $\|\cdot\|_F \le \sqrt{d}\|\cdot\|_\text{spectral}$, it suffices to set $\rho = \sqrt{d} + \sqrt{d}S$ to ensure capturing all functions in $\cH_\Phi$.

By leveraging \cref{thm:ol to mc,thm:mc to phi}, we can then reduce $\cH$-multicalibration to online learning on $\cH$ with linear losses where we optimize over matrices $\{M \in \R^{d\times d}: \|M\|_F\le \rho\}$. Crucially, several standard no-regret algorithms can be applied off-the-shelf on the Frobenius ball. In particular, by invoking the standard analysis of online projected gradient descent, we obtain the following.

\begin{proposition}\label{prop:linear}
    Let $\cZ,\cL \subseteq \R^d$ be well-bounded convex bodies given via efficient oracle access. We let $L$ denote the maximum Euclidean norm of any vector in $\cL$, $B$ denote the maximum Euclidean norm of any point in $\cZ$, and $S$ denote the maximum spectral (operator) norm of any linear endomorphism of $\cZ$.
    There exists a linear swap minimization algorithm for strategy set $\cZ$ and loss set $\cL$ with polynomial-time iterations that guarantees $\cO(BLS\sqrt{d T})$ linear swap regret for any time horizon $T$, no matter the sequence of losses $\ell_1, \dots, \ell_T \in \cL$.
\end{proposition}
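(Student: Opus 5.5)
The plan is to chain \Cref{thm:mc to phi} with \Cref{thm:ol to mc}. First, by \Cref{thm:mc to phi}, best-responding to a forecaster of the losses in $\cL$ that is multicalibrated with respect to $\cH_\Phi$ gives $\Regret^\Phi_T(\phi)\le \MCerr_T(h_\phi)$. For linear $\phi$ with matrix $A_\phi$, we have $h_\phi(p)=(I-A_\phi)\sigma(p)$. So every test in $\cH_\Phi$ belongs to the Frobenius ball class $\cH=\{p\mapsto M\sigma(p):\|M\|_F\le\rho\}$ with $\rho=\sqrt d(1+S)$, and it suffices to multicalibrate with respect to $\cH$. Running \Cref{algo:mc from ol} over $\cH$, \Cref{thm:ol to mc} gives $\MCerr_T(M)\le \Regret_T(M)+\EVI_T$, where the online learner plays matrices $M_t$ against the linear losses $f_t(M)=-\langle M, G_t\rangle_F$, with $G_t\coloneqq\E_{p_t\sim\cD_t}[(\ell_t-p_t)\sigma(p_t)^\top]$.

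Next I bound the regret term. I will use online projected gradient descent on the Frobenius ball of radius $\rho$, with step size tuned to the gradient bound. Since $\|\ell_t-p_t\|_2\le 2L$ and $\|\sigma(p_t)\|_2\le B$, we get $\|G_t\|_F\le 2LB$. The standard OGD bound then gives regret $\cO(\rho\cdot LB\sqrt T)=\cO(BLS\sqrt{dT})$, using $S\ge1$; indeed the identity is an endomorphism, so $S\ge 1$ and $\rho\le 2\sqrt d S$. I will choose the EVI precision $\eps_t=1/t^2$, so that $\EVI_T=\cO(1)$, which is dominated by the regret term.

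The main obstacle is efficiency of each iteration. The EVI at round $t$ is over $\cL$ with operator $p\mapsto M_t\sigma(p)$. This operator is discontinuous, but it is bounded by $\rho B$ and can be evaluated with one linear-optimization call over $\cZ$, so $\sigma$ is efficiently computable with deterministic tie-breaking. Because $\cL$ is well-bounded, \cref{fact:EVI} applies and returns $\cD_t$ with $\poly(d,\log t)$ support in $\poly(d,\log(\rho B t))$ time. This support bound makes $G_t$ computable exactly as a finite average. The projection onto the Frobenius ball is just a rescaling. Best-responding to a finitely supported $\cD_t$ yields a finitely supported $\mu_t$. Together these give polynomial-time iterations.

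Combining the two theorems yields $\Regret^\Phi_T\le\cO(BLS\sqrt{dT})+\cO(1)$ uniformly over all linear endomorphisms $\phi$. The argument never needs membership tests or projections onto the set of endomorphisms, which is what lets it avoid the semi-separation machinery.
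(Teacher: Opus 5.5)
Your proposal is correct and follows essentially the same route as the paper: chain \Cref{thm:mc to phi} with \Cref{thm:ol to mc}, enlarge $\cH_\Phi$ to the Frobenius ball of radius $\sqrt{d}(1+S)$, run projected gradient descent with gradient norm $\cO(LB)$ (using $S\ge 1$), and solve the EVIs to precision $t^{-2}$. Your extra efficiency details are a faithful elaboration of the paper's one-line appeal to \cref{fact:EVI}. These are that $\sigma$ is evaluated via the linear-optimization oracle, that $G_t$ is computed from the finite support of $\cD_t$, and that projection onto the ball is a rescaling.
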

\begin{proof}
    We construct the $\cH$-multicalibrated forecaster by instantiating \cref{thm:ol to mc} using projected gradient descent on $\{M \in \R^{d\times d} : \|M\|_F \le \rho\}$. At every time $t$, the algorithm outputs a matrix $M_t$ inducing the test $p \mapsto M\sigma(p) \in \cH$, and incurs the linear loss
    \begin{align*}
        f_t(M) = -\E_{p_t\sim\cD_t}\Big[(M \sigma(p_t))^\top (\ell_t - p_t)\Big] = -\Big\langle M, \E_{p_t \sim\cD_t}\Big[(\ell_t - p_t)\sigma(p_t)^\top\Big]\Big\rangle_F.
    \end{align*}
    Since $p_t,\ell_t\in\cL$ and $\sigma(p_t)\in \cZ$, the norm of the gradient of the loss function is therefore bounded by $\cO(LB)$. Furthermore, by construction of $\cH$ the Frobenius norm of the domain is bounded above by $\sqrt{d} + \sqrt{d}S$. Since $S \ge 1$ (the identity map is always an endomorphism of $\cZ$, and has unit spectral norm), the bound on the norm of the domain is therefore $\cO(\sqrt{d}S)$. Since the regret incurred by projected gradient descent scales as the norm of the domain, times the norm of the loss set, times $\sqrt{T}$, we obtain a regret bound on $\cH$ of the order $\cO(BLS\sqrt{d T})$. By \cref{thm:ol to mc}, if we solve the EVI problems to precision $\eps_t = t^{-2}$, we get that $\EVI_T \leq \cO(1)$ and for any linear endomorphism $\phi$:
    \begin{align*}
        \Regret^T_\Phi(\phi) \leq \MCerr_{T}(h_\phi) \leq \EVI_T + \Regret_T(h_\phi) \leq \cO(BLS\sqrt{dT}).
    \end{align*}
    We remark that the construction leads to polynomial-time iterates, in light of \cref{fact:EVI} and the polynomial dimension of $M$.
\end{proof}

If the set $\cZ$ is already in John's position and under the standard normalization assumption that $|\ell^\top z|\le1$ for all $\ell\in\cL, z\in\cZ$ (i.e., $B=d, L=1, S=d$), this yields a rate of $\cO(d^{5/2}\sqrt{T})$ which improves upon the rate $\cO(d^4 \sqrt T)$ of \citet{daskalakis2025efficient}. 

However, the benefit of this forecasting approach to $\Phi$-regret is not only its improved dependence on the dimension $d$, but rather that it establishes a dramatically simpler algorithmic template.  
Indeed, while GGM-based approaches necessarily need to grapple with the complicated geometry of endomorphisms (expected fixed points of non-endomorphic maps need not exist, invalidating the GGM reduction), the multicalibration route is much more permissive: it is always easy to multicalibrate with respect to more tests. In our case, we promoted the set of tests from $\cH_\Phi$, to the set of \emph{all} linear multicalibration tests of sufficiently large norm, and the latter is no harder than performing online projected gradient descent on a Euclidean ball. We did not have to concern ourselves with whether the transformations in the enlarged set $\cH \supseteq \cH_\Phi$ admit fixed points in $\cZ$.

\subsection{No-$\Phi$-Regret with Respect to RKHS Deviations}

Follow-up work to \cite{daskalakis2025efficient} has begun to explore $\Phi$-regret minimization beyond linear deviations, including cases where the deviations are low-degree polynomials or endomorphisms given by a linear combination of a finite, polynomial number of nonlinear features \citep{zhang2025learning}.

In this section, we point out that the same approach outlined in \cref{sec:lin swap} directly extends to these settings too. In fact, by multicalibrating over infinite sets of test functions using the techniques developed in \cref{sec:defensive}, we can obtain, with no effort and for the first time, $\Phi$-regret minimization algorithms in which $\Phi$ contains all transformations that lie in an RKHS ball (including RKHS with potentially infinitely many features, such as the Gaussian RKHS). This recovers low-degree polynomials as a special case.

\begin{theorem}
    Let $\cZ$ be convex and compact with efficient oracle access, and let $\Phi$ be the subset of endomorphic functions $\phi : \cX \times \cZ \to \cZ$ belonging to a vector-valued RKHS $\cH \subseteq \{\cX \times \cZ \to \R^d\}$ with corresponding matrix-valued kernel $\Gamma((x,z),(x',z'))$. Assume $\Gamma$ can be evaluated in polynomial time for any inputs and let $\rho \coloneqq  1+ \sup_{\phi\in\Phi} \|\phi\|_\cH$ scale additively with the maximum RKHS norm of any of the functions in $\Phi$. Then, using Defensive Forecasting (\Cref{alg:defensive_forecasting}, \Cref{prop:defensive}) with the following kernel,
    \begin{align}        
    \label{eq:rkhs_kernel}
        \Gamma'((x, p), (x', p')) = \sigma(p) \sigma(p')^\top  + \Gamma((x,\sigma(p)), (x', \sigma(p'))),
    \end{align}
    as the loss forecaster in the black-box reduction between multicalibration and $\Phi$-regret minimization (\Cref{algo:phi from mc}) produces distributions $\mu_t$ over $\cZ$ satisfying the following for any sequence $\{\ell_t\}_{t=1}^T$: 
    \begin{align*}
                \sup_{\phi \in \Phi} \sum_{t=1}^T \E_{z_t \sim \mu_t} \Big[\ell_t^\top(z_t - \phi(x_t,z_t))\Big]  \le \cO\left(\rho \sqrt{\sum_{t=1}^T\E_{p_t \sim \cD_t}(p_t - \ell_t)^\top \Gamma'((x_t,p_t),(x_t,p_t))(p_t - \ell_t)}\right).
    \end{align*}
\end{theorem}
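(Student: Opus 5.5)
The plan is to chain the two reductions. First, by the contextual version of \Cref{thm:mc to phi}, for each $\phi\in\Phi$ the contextual $\Phi$-regret of best-responding to $\cD_t$ is at most $\MCerr_T(h_\phi)$, where $h_\phi(x,p)=\sigma(p)-\phi(x,\sigma(p))$. It therefore suffices to show two things. First, every $h_\phi$ belongs to the vector-valued RKHS $\cH'$ induced by the kernel $\Gamma'$ in \eqref{eq:rkhs_kernel}. Second, $\|h_\phi\|_{\cH'}\le\rho$. Once both hold, \Cref{prop:defensive} applied with kernel $\Gamma'$ bounds $|\MCerr_T(h_\phi)|$ by $\rho$ times the stated square root, after absorbing the additive $+1$ inside the root into the $\cO(\cdot)$. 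Taking the supremum over $\phi$ gives the claim.

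The membership step uses the standard sum and pullback rules for vector-valued RKHSs. For the first summand, the kernel $k_1((x,p),(x',p'))=\sigma(p)\sigma(p')^\top$ has the feature-map form $\Psi_1(x,p)^\top\Psi_1(x',p')$ with $\Psi_1(x,p)=\sigma(p)^\top\in\R^{1\times d}$. By the feature-map description after \Cref{prop:defensive}, its RKHS is $\{(x,p)\mapsto \sigma(p)\theta:\theta\in\R\}$ with norm $|\theta|$. So $(x,p)\mapsto\sigma(p)$ lies in it with norm $1$.

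For the second summand, $k_2((x,p),(x',p'))=\Gamma((x,\sigma(p)),(x',\sigma(p')))$ is the pullback of $\Gamma$ under the map $(x,p)\mapsto(x,\sigma(p))$. Its RKHS is $\{g\circ(\mathrm{id},\sigma):g\in\cH\}$, with norm $\|f\|=\inf\{\|g\|_\cH: f=g\circ(\mathrm{id},\sigma)\}\le\|g\|_\cH$. Hence $(x,p)\mapsto\phi(x,\sigma(p))$ has norm at most $\|\phi\|_\cH$.

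The RKHS of a sum of kernels consists of sums $f_1+f_2$, with norm $\|f\|^2=\inf\{\|f_1\|^2+\|f_2\|^2\}$. Therefore $\|h_\phi\|_{\cH'}\le\sqrt{1+\|\phi\|_\cH^2}\le 1+\|\phi\|_\cH\le\rho$.

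The main obstacle is justifying these two RKHS facts (sum and pullback) for matrix-valued kernels, since the paper only states the feature-map characterization. I would try to prove them through feature maps. If $\Gamma(u,u')=\Psi(u)^\top\Psi(u')$ for a feature map $\Psi$ into a Hilbert space, then $\Gamma'$ has feature map $(x,p)\mapsto\big(\sigma(p)^\top,\Psi(x,\sigma(p))\big)$ into $\R\oplus\mathcal{K}$. Taking $\theta=(1,-\theta_\phi)$, with $\theta_\phi$ representing $\phi$, gives $h_\phi$ directly, with norm $\sqrt{1+\|\theta_\phi\|^2}$. The proof of \Cref{prop:defensive} (via the FTRL reduction, \Cref{lemma:rftl}) works verbatim in this possibly infinite-dimensional Hilbert space, so this route avoids the general RKHS theory.

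Finally, \Cref{prop:defensive} is stated with $\cY$ the forecast/outcome set. Here that set is the loss set $\cL$, and the kernel is evaluated at $p\in\cL$, which is fine. Evaluating $\Gamma'$ needs only one best-response computation plus one evaluation of $\Gamma$, so it is polynomial time and the per-round efficiency claim carries over.
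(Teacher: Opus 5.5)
Your proposal is correct and is essentially the paper's proof: you reduce via the contextual form of \Cref{thm:mc to phi} to bounding $\MCerr_T(h_\phi)$, use the sum-of-kernels and pullback rules to place $h_\phi(x,p)=\sigma(p)-\phi(x,\sigma(p))$ in the RKHS of $\Gamma'$ with norm at most $\sqrt{1+\|\phi\|_\cH^2}\le\rho$, and invoke \Cref{prop:defensive}. Your explicit feature map $(x,p)\mapsto(\sigma(p)^\top,\Psi(x,\sigma(p)))$ is a self-contained substitute for the RKHS facts the paper cites from the literature, and your remark about absorbing the additive $+1$ into the $\cO(\cdot)$ makes explicit a step the paper performs silently (strictly, this needs the kernel sum to be bounded below by a constant; otherwise it leaves an additive term, which can be made small by shrinking the EVI precisions $\eps_t$).
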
 
\begin{proof}
    Since $\cZ$ has an efficient linear optimization oracle, evaluations of the best response map $\sigma : p \mapsto \argmin_{z \in \cZ} p^\top z$ are also efficient.
    From \cref{thm:mc to phi}, it suffices to construct an $\cH_\Phi$ multicalibrated forecaster for the loss set $\cL = \{\ell \in \R^d : |\ell^\top z| \le 1 ~~\forall z \in \cZ\}$, where
    \begin{align}
    \label{eq:RKHS_test_functions}
            \cH_\Phi = \{(x, p) \mapsto \sigma(p) - \phi(\sigma(p)) : \phi \in \Phi\}.
    \end{align}
    The crucial observation is that since every deviation $\phi$ belongs to an RKHS with kernel $\Gamma((x, p), (x', p'))$, then $\phi(\sigma(p))$ belongs to an RKHS with kernel $\Gamma((x, \sigma(p)), (x',\sigma(p')))$ and the functions $\sigma(p) - \phi(x,\sigma(p))$ belong the RKHS corresponding to the kernel in \Cref{eq:rkhs_kernel}. 
    
    This makes use of the fact that if $f_1$ is in the RKHS defined by kernel $\Gamma_1$ and $f_2$ is in the RKHS defined by $\Gamma_2$, then $\Gamma_1 + \Gamma_2$ is a kernel that contains $f_1 + f_2$. Furthermore, the norm of $f_1 + f_2$ in the new RKHS is at most the square root of the sum of the squared norms.
    
    The functions $\sigma(p)$ lie in the RKHS corresponding to the linear kernel $\sigma(p) \sigma(p')^\top$ and have norm at most 1. Furthermore, $\phi(x,\sigma(p))$ lie in the RKHS corresponding to $\Gamma((x,\sigma(p)), (x', \sigma(p')))$ and have norm at most $\rho$ by our assumption on $\Gamma$ and Theorem 5.14 in \cite{paulsen2016introduction}. Therefore, \Cref{prop:defensive} guarantees that the multicalibration error of the forecaster with respect to the functions in \Cref{eq:RKHS_test_functions} is at most
    \begin{align*}
        \cO\left(\rho \sqrt{\sum_{t=1}^T\E_{p_t \sim \cD_t}(p_t - \ell_t)^\top \Gamma'((x_t,p_t),(x_t,p_t))(p_t - \ell_t)}\right).
    \end{align*}
    The bound on $\Phi$-regret follows from our main result in \Cref{thm:mc to phi}.
\end{proof}

Note that this result is in fact stated for the stronger notion of \textit{contextual} regret where the decision maker can observe features each round, and considers deviations of their actions that depend on such contexts. Furthermore, we state the result for general kernels $\Gamma$ to illustrate a tighter sequence-dependent bound. However, virtually all known kernels (polynomial, Gaussian, etc.) have $\|\Gamma((x,p),(x,p))\|_{\mathrm{op}}$ bounded by a constant $B$. In this case, we can replace the bound by one of the form $\cO(\sqrt{T})$ where the $\cO(\cdot)$ hides dependence on $B$ and the norms of $\ell_t$.

\section*{Acknowledgements}

We would like to thank Haipeng Luo, Aaron Roth, and Vatsal Sharan for helpful comments and discussion. GF was supported in part by the National Science Foundation award CCF-2443068, the Office of Naval Research grant N000142512296, and an AI2050 Early Career Fellowship.

\printbibliography

\end{document}